\documentclass{article}

\PassOptionsToPackage{numbers,compress}{natbib}
\usepackage[preprint]{neurips_2026}

\usepackage[utf8]{inputenc}
\usepackage[T1]{fontenc}
\usepackage{hyperref}
\usepackage{url}
\usepackage{booktabs}
\usepackage{amsfonts}
\usepackage{nicefrac}
\usepackage{microtype}
\usepackage{xcolor}

\usepackage{enumitem}
\usepackage{graphicx}
\usepackage{subcaption}
\usepackage{amsmath}
\usepackage{amssymb}
\usepackage{amsthm}
\usepackage{cleveref}
\usepackage[table]{xcolor}

\theoremstyle{remark}

\theoremstyle{plain}

\newcounter{obs}
\crefname{obs}{Observation}{Observations}
\Crefname{obs}{Observation}{Observations}
\newcommand{\obshead}{%
  \phantomsection
  \refstepcounter{obs}%
  \paragraph*{Observation \theobs}%
}

\newtheorem{proposition}{Proposition}[section]
\theoremstyle{remark}

\usepackage{makecell}
\usepackage{wrapfig}
\usepackage{placeins}
\usepackage{float}
\usepackage[most]{tcolorbox}

\tcolorboxenvironment{proposition}{
  enhanced,
  breakable,
  colback=black!3,
  colframe=black!25,
  boxrule=0.7pt,
  arc=1.5mm,
  left=7pt,
  right=7pt,
  top=6pt,
  bottom=6pt,
  before skip=10pt,
  after skip=10pt
}

\definecolor{linkblue}{HTML}{1A4FB5}
\hypersetup{
    colorlinks=true,
    linkcolor=linkblue,
    citecolor=linkblue,
    urlcolor=linkblue
}

\newcommand{\R}{\mathbb{R}}

\title{The Seriality Gap in Video Diffusion Models}

\author{%
  Jorge Diaz Chao\thanks{Equal contribution.} \quad
  Konpat Preechakul\footnotemark[1] \quad
  Yuxi Liu \quad
  Yutong Bai \\\\
  UC Berkeley \\
  \texttt{\{jdiazchao,konpat,yuxi\_liu,yutongbai\}@berkeley.edu}
}

\begin{document}

\maketitle
\begingroup
\renewcommand{\thefootnote}{}
\footnotetext{\url{https://seriality-gap.jdiazchao.com}}
\addtocounter{footnote}{-1}
\endgroup

\begin{figure}[h]
  \centering
  \includegraphics[width=0.95\linewidth]{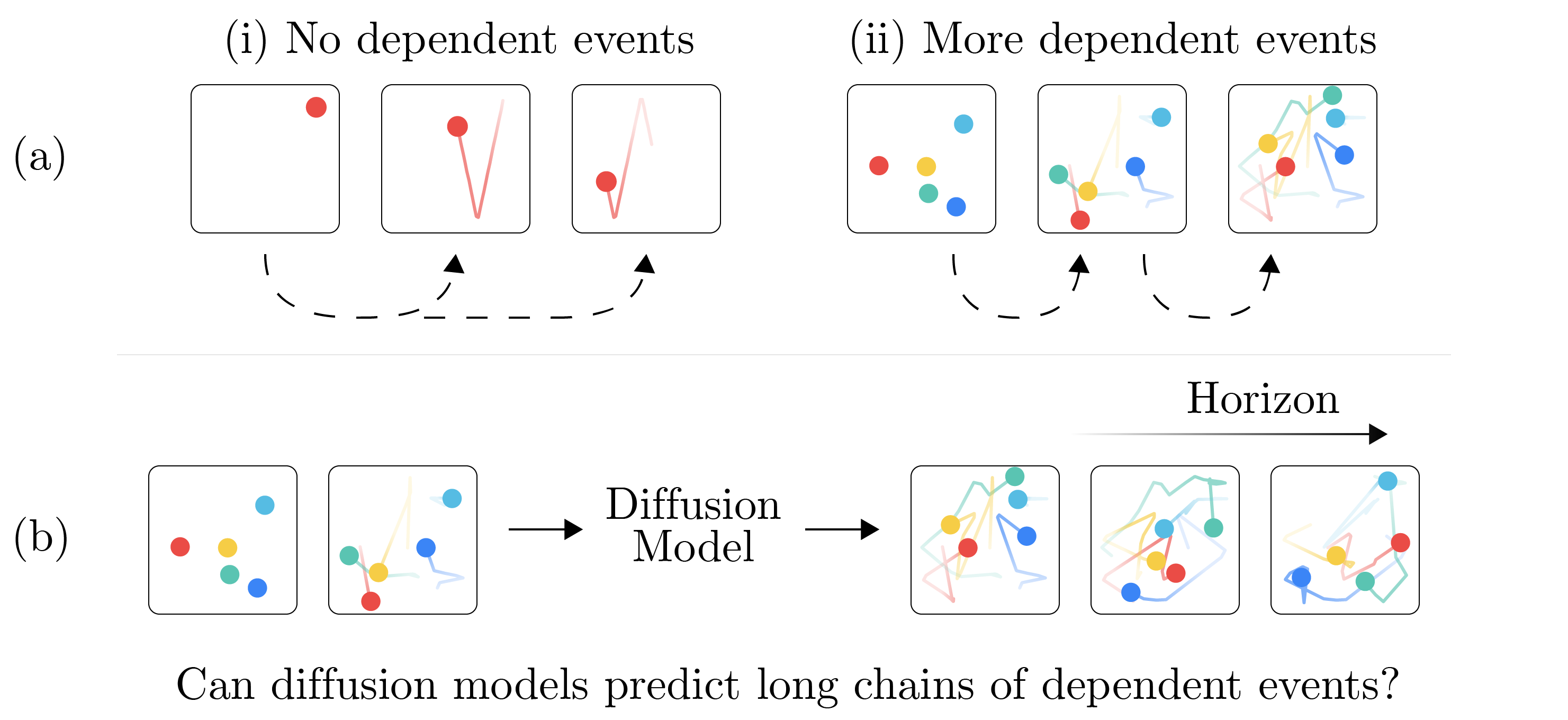}
  \caption{\textbf{Dependent-event prediction exposes the seriality gap.}
  (a) Hard-sphere dynamics separate non-serial from serial video prediction.
  (i) In the single-ball control, any future state can be computed directly from the initial state, without resolving intermediate states.
  (ii) With multiple balls, each ball--ball collision changes the state governing later collisions, creating dependent-event chains that must be resolved in temporal order.
  (b) Given initial frames, can a diffusion model remain accurate as longer prediction horizons demand more serial computation?}
  \label{fig:teaser}
\end{figure}

\begin{abstract}

When one ball strikes another, then another, video models should predict the consequences of each bounce.
In controlled experiments on multi-ball hard-sphere dynamics, we find that the performance of standard bidirectional video diffusion degrades as the causal chain lengthens, even when provided more denoising steps.
In a length-matched single-ball control, where ball--ball interactions are absent, the degradation largely disappears, isolating dependent-event structure rather than video length as the cause.
Across intervention studies, \textbf{methods that increase effective serial computation improve performance disproportionately}, including autoregressive/blockwise generation and architectural depth.
We identify this pattern as the \textbf{seriality gap}: a mismatch between tasks requiring growing serial computation and video diffusion models whose denoising loop does not provide scalable serial compute.
We then prove that, for deterministic video prediction, denoising steps do not add serial computation beyond the backbone, indicating a structural obstacle for video diffusion on serial reasoning and simulation tasks.

\end{abstract}

\newpage

\section{Introduction}
\label{sec:intro}

When one ball strikes a second, which then strikes a third, predicting the chain of consequences seems like a minimal competence for any video model.
Yet, even as video diffusion models generate compelling clips and are increasingly positioned as world simulators and visual reasoners \citep{videoworldsimulators2024,wiedemer2025videomodelszeroshotlearners}, persistent failures in physical consistency \citep{meng2024worldsimulatorcraftingphysical,guo2025t2vphysbenchfirstprinciplesbenchmarkphysical,motamed2025generativevideomodelsunderstand,raedsch2026physicsiqverified,tragoudaras2026evaluatingnewtonianmechanicsvideo,Huang_2024_CVPR,zheng2025vbench20advancingvideogeneration} and reasoning \citep{wiedemer2025videomodelszeroshotlearners,newman2026videomodelsreasonearly} remain.
What if this is due not only to a lack of data coverage but also to a limitation of the diffusion model itself? We therefore ask the following:

\begin{center}
\emph{Can video diffusion models predict ever-longer chains of dependent events?}
\end{center}

This simple question is consequential precisely because the \textit{negative} answer would be surprising.
Because diffusion generates videos through repeated denoising, failure on this task would call into question what computation those iterations actually provide---an issue that has recently received substantial attention but remains unresolved.
Some studies attribute reasoning to computation across denoising steps \citep{wang2026demystifying}; others report erosion of motion structure over later steps \citep{han2026phaselock}, early plan commitment and the need for chained generations for longer solutions \citep{newman2026videomodelsreasonearly}, or rapidly saturating gains from additional steps \citep{ma2025inferencetimescalingdiffusionmodels}.
Together, these findings motivate characterizing whether denoising supplies computation that scales with dependency-chain length (Section~\ref{sec:related_work}).

We study this question with a deliberately minimal video-prediction task: \textbf{hard-sphere dynamics}.
Identical balls move frictionlessly in a box and undergo elastic collisions; given the initial frames of a trajectory, the model must predict the future video (Figure~\ref{fig:teaser}b).
Here, visual complexity is minimal, the dynamics are deterministic, and prediction errors are unambiguous.
Our setup varies the length of the dependent-event chain: with $n>1$ balls, longer $f$-frame rollouts contain longer chains of ball--ball interactions, whereas $n=1$ acts as a control with no such interactions and admits a closed-form solution.
This lets us vary serial complexity while holding visual content essentially fixed, a control difficult to obtain in natural video (Section~\ref{sec:benchmark}).

The task's simplicity is deliberate.
Natural video adds appearance complexity, uncertainty, and ambiguity, but none removes the need to propagate dependent events.
Failure in this controlled setting therefore identifies a bottleneck that may persist in richer settings; conversely, success would address that bottleneck, not solve video modeling as a whole.

\vspace{-0.5em}
\begin{figure}[h]
  \centering
  \includegraphics[width=0.85\linewidth]{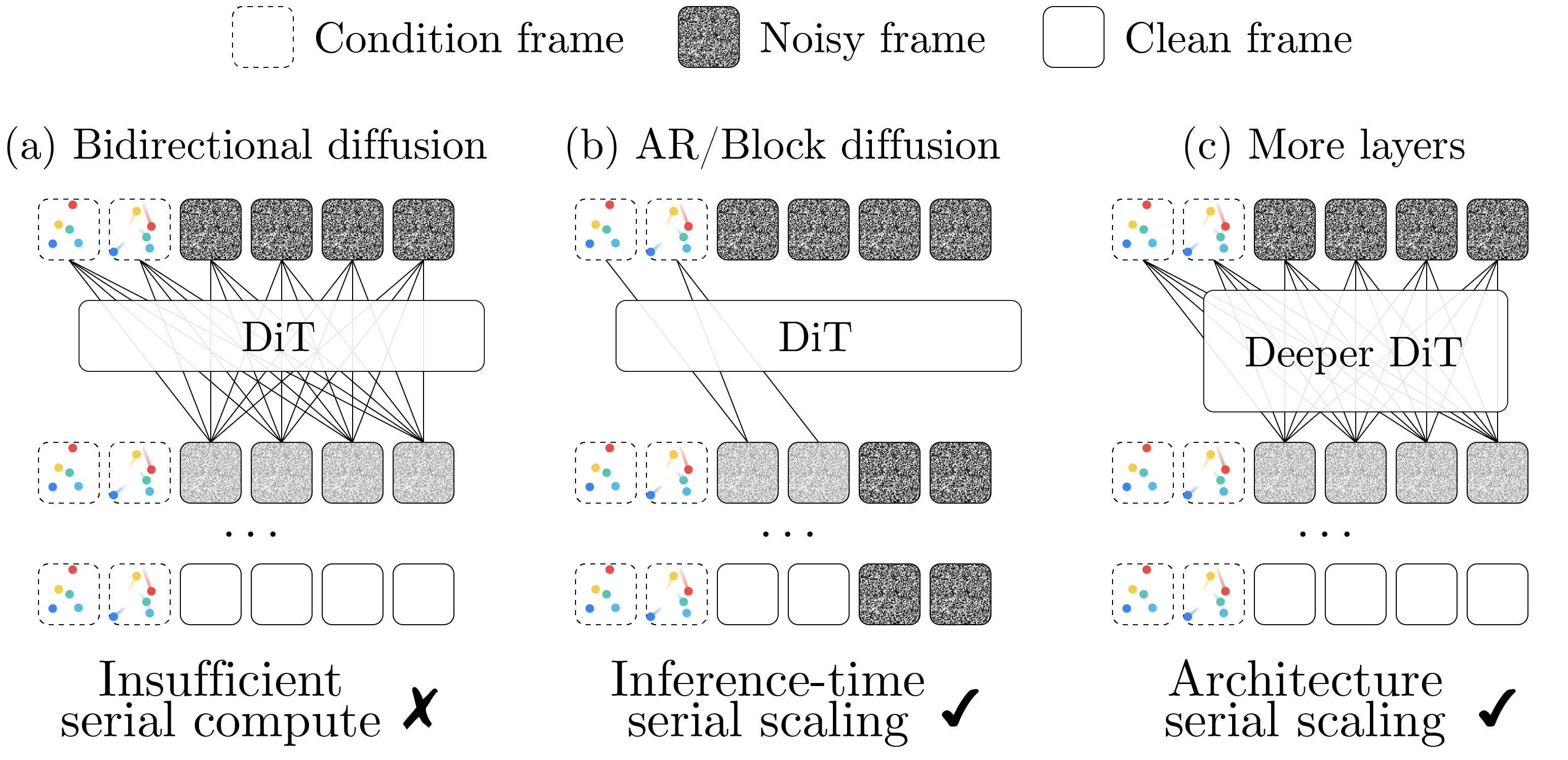}
  \caption{\textbf{Serial scaling.}
  (a) Bidirectional diffusion jointly denoises future frames, but denoising does not supply scalable serial compute (Section~\ref{theory}).
  Consistent with our theory, performance improves with
  (b) autoregressive/blockwise inference factorization or
  (c) a deeper backbone.}
  \label{fig:serial_scaling}
\end{figure}
\vspace{-1.0em}

\paragraph{The seriality gap (Section~\ref{results})}
In controlled multi-ball ($n=5$) experiments, standard \textit{bidirectional} video diffusion degrades as causal chains lengthen.
This degradation largely disappears in the length-matched single-ball ($n=1$) control, showing that the gap is driven by more difficult event dependencies rather than video length.
Increasing denoising steps provides little benefit and does not close the gap.
By contrast, at comparable total compute, interventions that add effective serial computation help disproportionately: increasing depth is more effective than increasing width, and serializing generation yields monotonic improvements---from bidirectional generation through progressively smaller temporal blocks to full autoregression.

\paragraph{Why denoising steps do not close the gap (Section~\ref{theory})}
The result is counterintuitive for two reasons.
First, longer rollouts already give bidirectional video DiTs~\cite{Peebles_2023_ICCV} far more total compute than even a simulator would require: temporal self-attention scales as $\mathcal{O}(f^2)$, while a physical rollout needs only $\mathcal{O}(f)$.
Second, diffusion denoising is itself iterative, suggesting that more steps could close the seriality gap.
Both intuitions fail.
The issue is not total computation but scalable serial computation.

For a closed, deterministic video process, the future is uniquely determined by the initial frames. If the score network computes the exact score, a single evaluation contains enough information to recover that future (Proposition~\ref{prop1}). Repeated denoising therefore does not provide additional serial computation beyond what the backbone has already computed. 
When predicting dependent events requires serial computation beyond the capacity of a fixed-depth backbone, the required computation must instead come from greater backbone depth or an inference factorization—such as autoregression—aligned with the temporal dependency graph (Figure~\ref{fig:serial_scaling}).
This explains why depth and temporal factorization help to predict dependent events, while additional denoising steps do not.

\paragraph{Beyond hard-sphere dynamics}
Under accurate score matching, denoising is not an independent source of scalable serial computation beyond the backbone. This limitation is not specific to hard-sphere dynamics: it can arise whenever diffusion models face tasks whose serial demands grow with problem size, including maze and puzzle solving. In such tasks, however, the dependency structure need not follow temporal order, so frame-wise autoregression is not a universal remedy. The needed serial computation must instead come from greater backbone depth, a task-aligned inference factorization, or potentially a new class of non-score-matching generative models.

\vspace{-0.5em}\paragraph{Contributions}
\begin{enumerate}[label=(\roman*), leftmargin=*, topsep=0.25em, itemsep=0.25em]
    \item We introduce a video-prediction testbed based on hard-sphere dynamics for studying \emph{dependent-event prediction}. The testbed varies serial complexity while holding visual content essentially fixed (Section~\ref{sec:benchmark}).
    \item We identify the \emph{seriality gap}: the performance of bidirectional video diffusion degrades as chains of dependent events lengthen, despite increasing total compute. This degradation is not explained by video length or closed by additional denoising steps, but is reduced by interventions that add effective serial computation (Section~\ref{results}).
    \item We show theoretically that, for deterministic video prediction, denoising steps do not add scalable serial computation beyond what is already computable by the backbone (Section~\ref{theory}).
\end{enumerate}

\section{Hard-Sphere Dynamics as a Testbed for Dependent-Event Prediction}
\label{sec:benchmark}

In order to study serial temporal prediction, we design a setting where dependency length can vary while visual complexity and ambiguity remain controlled.
We use \textbf{hard-sphere dynamics} (Figure~\ref{fig:teaser}a): $n$ identical balls move frictionlessly in a square box and undergo perfectly elastic collisions with walls and with one another.
The rendering is simple, the dynamics are deterministic, and each initial condition has a single ground-truth rollout.
This deliberately minimal setting is diagnostic---if a video model fails here, the failure is unlikely to be explained by visual complexity or ambiguous futures.

In this setting, serial dependence is determined by the number of ball--ball collisions.
This lets us compare multi-ball ($n>1$) rollouts, where collision chains grow with the prediction horizon, against single-ball ($n=1$) rollouts, which can be equally long but have closed-form, non-serial futures. This contrast separates difficulty due to video length from difficulty due to serial dependence.

\subsection{Experimental Setting}
\label{sec:experiment}

\paragraph{Dataset}
\label{par:dataset}

For each dataset configuration, we synthesize $f$-frame hard-sphere simulation videos at $128 \times 128$ resolution.
Because the simulator is deterministic, each initial condition has a unique ground-truth rollout, so prediction error can be measured directly and unambiguously.
Appendix~\ref{data-synth} documents the simulator, initial-condition sampling, and filtering criteria.

We construct two dataset families.
In the serial setting ($n=5$), our filtering procedure enforces a minimum post-conditioning ball--ball collision density, so the minimum number of collision events grows with the $f$-frame prediction horizon.
Since each collision can change the state that determines subsequent collisions, these events must be resolved in temporal order, making $f$ a proxy for serial complexity.
In the length-matched control ($n=1$), ball--ball collisions are absent, so any future state can be computed in closed form from the initial conditions without simulating intermediate states.

As $f$ increases, bidirectional video diffusion models process larger spatiotemporal tensors and therefore receive more total computation~\citep{wan2025wanopenadvancedlargescale}.
If this computation were sufficient for tracking dependent events, local physical accuracy in the multi-ball setting should not degrade substantially more with $f$ than in the single-ball control.

\paragraph{Models}
\label{par:models}

We train DiT-style~\citep{Peebles_2023_ICCV} video diffusion models adapted from \texttt{Wan2.1}~\citep{wan2025wanopenadvancedlargescale}, a standard bidirectional video diffusion architecture with competitive performance.
Every model operates in the \texttt{Wan2.1} VAE latent space, which downsamples videos by $8\times$ along each spatial axis and $4\times$ temporally. The patch embedder groups $2\times2$ spatial latent patches without additional temporal downsampling, so one temporal latent frame corresponds to four rendered video frames.

We compare standard bidirectional denoising with variants that allocate computation more serially, as summarized in Figure~\ref{fig:serial_scaling}.
Autoregressive and block-autoregressive variants use the same base architecture but replace full temporal attention with FlexAttention~\citep{dong2024flex} block-causal masks over temporal latent frames.
We write Block-$k$ for generation in blocks of $k$ temporal latent frames; Appendix~\ref{attention-masks} shows the training and inference attention masks.
Unless otherwise specified, evaluation uses $T=50$ denoising steps.
To separate inference factorization from architecture, we also vary backbone width $d_\text{model}$ and depth $n_\text{layers}$.

\paragraph{Training}
\label{training}
Models are trained for $200{,}000$ iterations with a batch size of $64$ using AdamW with a weight decay of $0.01$ and a constant learning rate of $2 \times 10^{-4}$. We maintain an exponential moving average (EMA) of the weights with decay $0.9999$ and use the EMA weights for evaluation. The reported training runs require approximately $300$ A100-80GB GPU-days.

\paragraph{Evaluation}
\label{eval}
We evaluate predictions along three axes. 

\begin{enumerate}[leftmargin=*, topsep=0.25em, itemsep=0.25em, parsep=0pt]
    \item \textbf{Global error $\Delta x^{(\mathrm{GT})}$} measures the average Euclidean distance between predicted and ground-truth ball centers across all balls and evaluated frames. This is the most direct notion of prediction accuracy, but it is sensitive to accumulated long-horizon drift.

    \item \textbf{Local error $\Delta x^{(k)}$} measures local physical consistency. For each generated frame, we go back $k$ frames, extract ball centers and velocities from the generated pixels, roll the simulator forward for $k$ steps, and compare the result to the generated frame. This avoids penalizing accumulated drift caused by state-estimation errors. By default, we report $\Delta x^{(5)}$.

    \item \textbf{Visual quality (IoU)} measures shape fidelity independently of position error. For each ball, we center-align its generated silhouette with an ideal circular mask of the correct radius and compute their intersection over union.
\end{enumerate}

Let $\mathcal{I}=\{f_0+1,\ldots,f\}$ denote the predicted frames used for evaluation, with $f_0=5$ conditioning frames in all experiments. We define the global and local accuracy metrics as follows:
\begin{equation*}
\Delta x^{(\mathrm{GT})}
=
\frac{1}{|\mathcal{I}|\,n}
\sum_{t \in \mathcal{I}} \sum_{i=1}^{n}
\left\| \hat{x}_{t,i} - x^{\mathrm{GT}}_{t,i} \right\|_2,
\end{equation*}
\begin{equation*}
  \Delta x^{(k)}
=
\frac{1}{|\mathcal{I}|\,n}
\sum_{t \in \mathcal{I}} \sum_{i=1}^{n}
\left\| \hat{x}_{t,i} - \tilde{x}^{(k)}_{t,i} \right\|_2,
\end{equation*}
where $\hat{x}_{t,i}$ denotes the predicted position of ball $i$ at time $t$, $x^{\mathrm{GT}}_{t,i}$ the ground-truth position, and $\tilde{x}^{(k)}_{t,i}$ the position obtained by extracting the generated state at time $t-k$ and rolling the simulator forward $k$ steps. A human evaluation in Appendix~\ref{app:human-eval} confirms that large gaps in these metrics are salient.

Since models predict pixels rather than coordinates, we recover $\hat{x}_{t,i}$ by thresholding the known color range of ball $i$ in each generated frame and taking the centroid of the segmented mask; $x^{\mathrm{GT}}_{t,i}$ is obtained from the simulator.

\pagebreak

\section{The Seriality Gap in Hard-Sphere Dynamics}
\label{results}

We summarize the empirical findings as five observations. Appendix~\ref{app:numerical-results} gives full results; Appendix~\ref{app:alternative-explanations} rules out optimization and initialization confounds.

\begin{figure}[h]
    \centering
    
    \begin{subfigure}[t]{0.48\linewidth}
        \centering
        \includegraphics[width=\linewidth]{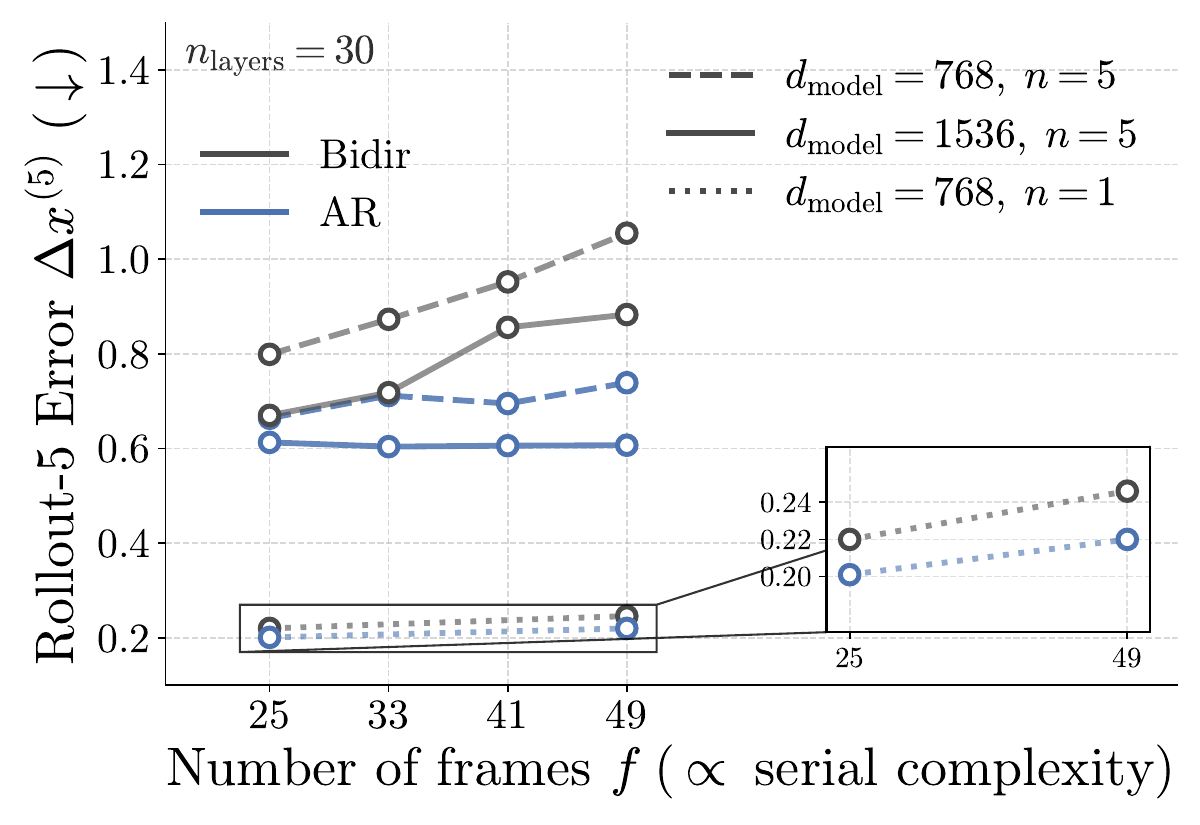}
    \end{subfigure}
    \hfill
    \begin{subfigure}[t]{0.48\linewidth}
        \centering
        \includegraphics[width=\linewidth]{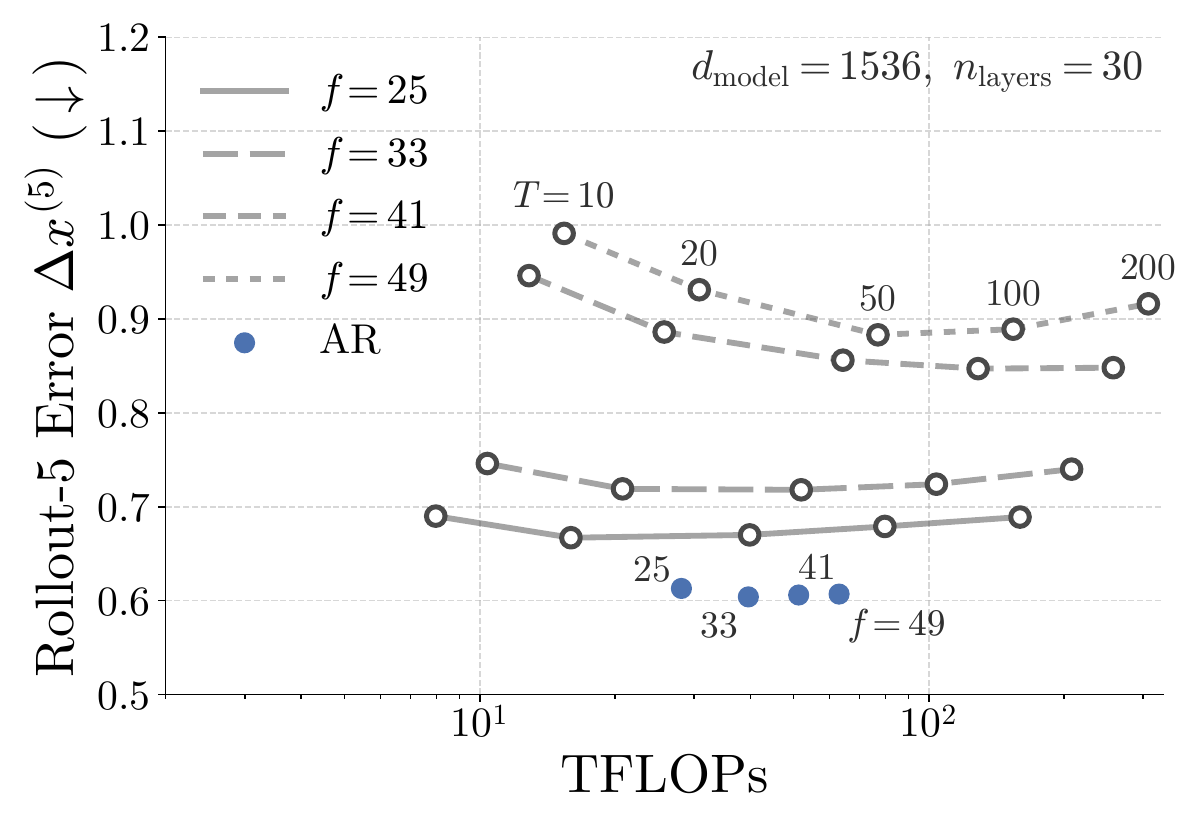}
    \end{subfigure}
    
    \vspace{0.5em}
    
    \begin{subfigure}[t]{0.48\linewidth}
        \centering
        \includegraphics[width=\linewidth]{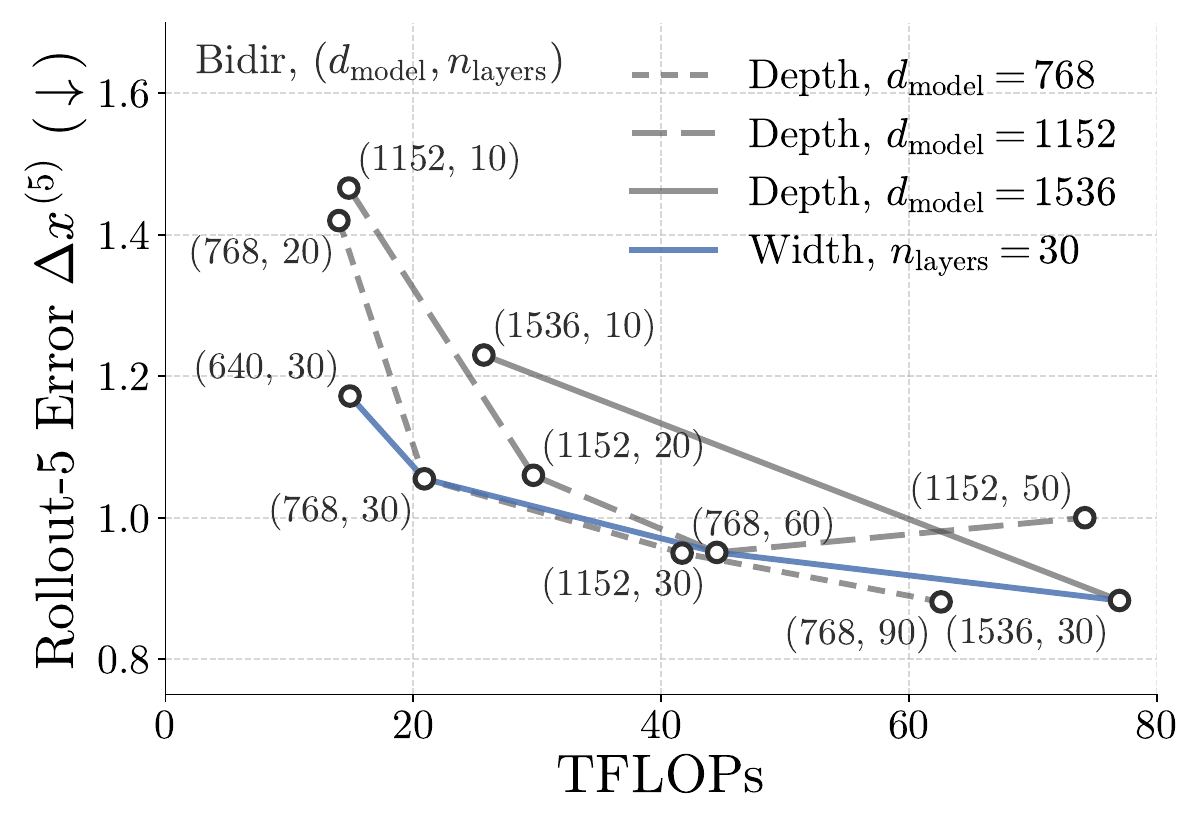}
    \end{subfigure}
    \hfill
    \begin{subfigure}[t]{0.48\linewidth}
        \centering
        \includegraphics[width=\linewidth]{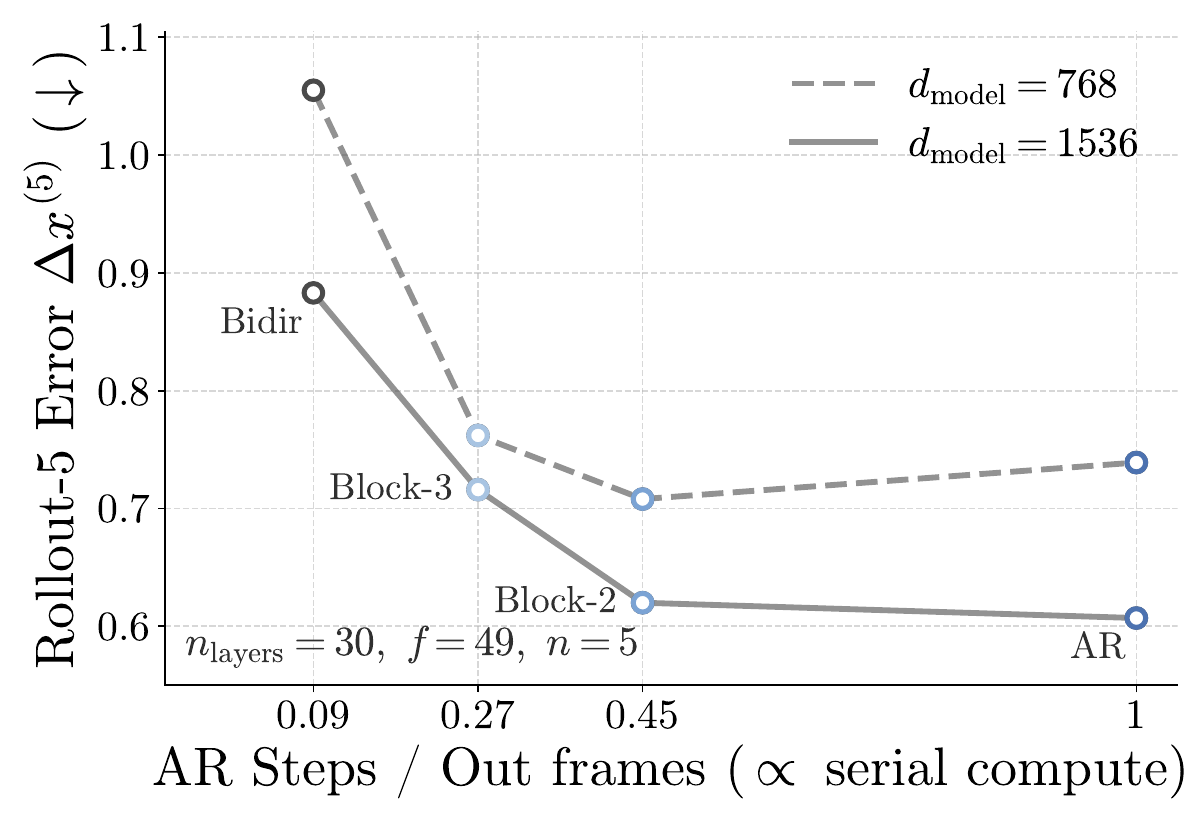}
    \end{subfigure}

    \caption{\textbf{Empirical evidence for the seriality gap.}
    (Top left) Bidirectional diffusion degrades as collision chains lengthen, while autoregressive generation remains more accurate.
    (Top right) Increasing the number of denoising steps does not close the gap.
    (Bottom left) Changing backbone depth and width probes whether architectural serial computation helps.
    (Bottom right) Performance improves as inference becomes more temporally serial, interpolating from bidirectional to blockwise to autoregressive generation.
    All plotted errors are $\Delta x^{(5)}$ (lower is better).}
    \label{fig:main}
\end{figure}

\obshead
\label{obs:one}

\textbf{Bidirectional diffusion degrades as videos get longer.}
In the $n=5$ setting, our filtering criterion makes the minimum number of post-conditioning ball--ball collisions grow with video length $f$, thereby increasing serial complexity (Appendix~\ref{data-synth}).
Although both bidirectional and autoregressive models receive more inference compute at longer horizons, with temporal self-attention scaling roughly as $\mathcal{O}(f^2)$, autoregressive generation maintains lower and nearly flat local physical error $\Delta x^{(5)}$. Bidirectional diffusion instead worsens with $f$, as shown in Figure~\ref{fig:main}~(top left).

\obshead
\label{obs:two}

\textbf{The gap disappears without ball--ball collisions.}
As a length-matched control, we set $n=1$, removing ball--ball collisions and leaving trajectories that can be computed in closed form from the initial state. In this non-serial setting, bidirectional and autoregressive models have nearly identical local physical error $\Delta x^{(5)}$, and both remain stable as $f$ increases, as shown in Figure~\ref{fig:main}~(top left). Together with the $n=5$ result, this control shows that video length alone cannot explain the degradation and supports serial complexity as its key driver.

\obshead
\label{obs:three}

\textbf{Serializing inference improves performance.}
We interpolate from bidirectional to autoregressive diffusion using block-causal masks over temporal latent frames, keeping the backbone fixed. Larger blocks denoise more frames jointly, while smaller blocks impose a more sequential generation order. Local physical error improves as the factorization becomes more serial, following $\text{bidirectional (Block-11)} \rightarrow \text{Block-3} \rightarrow \text{Block-2} \approx \text{autoregressive (Block-1)}$ across model sizes, as shown in Figure~\ref{fig:main}~(bottom right). Since model capacity is unchanged, these gains reflect temporal compute allocation rather than a larger backbone.

\obshead
\label{obs:four}

\textbf{More denoising steps do not close the gap.}
A natural explanation is that bidirectional diffusion may simply be under-computed at inference time.
If denoising iterations supplied the serial computation needed to predict longer collision chains, increasing $T$ should especially improve long-horizon rollouts.
We therefore evaluate bidirectional diffusion with $T \in \{10,20,50,100,200\}$ across all video lengths.
Performance improves up to roughly \(T=50\); beyond this point, additional denoising steps provide no consistent benefit and sometimes increase error, while the long-horizon degradation remains.
Increasing $T$ therefore does not close the gap to autoregressive generation, as shown in Figure~\ref{fig:main}~(top right).

\obshead
\label{obs:five}

\textbf{Depth scaling is more effective than width scaling.}
We next ask whether the missing serial computation can be supplied by the backbone itself.
We sweep bidirectional models over width $d_\text{model}$ and depth $n_\text{layers}$, varying how capacity and compute are allocated.
Bidirectional models benefit more strongly from increasing depth than increasing width up to about $30$ layers. 
At comparable budgets, deeper, narrower models outperform shallower, wider ones, and the gain from $n_\text{layers} = 10$ to $30$ layers exceeds the improvements from widening a $30$-layer model from $d_\text{model}=640$ to $1536$, as shown in Figure~\ref{fig:main}~(bottom left).
Beyond $30$ layers, additional depth gives diminishing returns, possibly because deeper models become harder to optimize \citep{NIPS2015_215a71a1}. 

\begin{figure}[h]
  \centering
  \includegraphics[width=\linewidth]{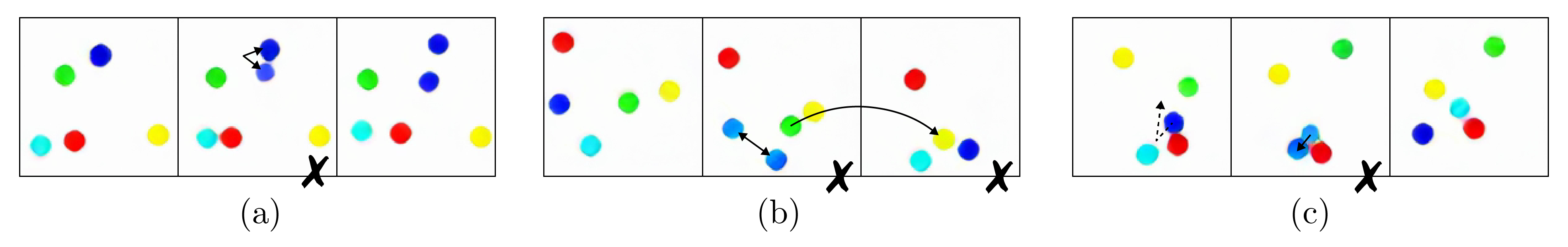}
  \caption{\textbf{Illustrative failure modes in bidirectional generations.}
    We show common qualitative failures produced by the bidirectional diffusion model trained on $f=49$, $n=5$, $d_{\mathrm{model}}=1536$, and $n_{\mathrm{layers}}=30$.
    (a) The dark-blue ball splits into two, resulting in a ball-count violation.
    (b) The dark-blue and light-blue balls exchange colors, while the green and yellow balls merge into one; both failures violate state consistency.
    (c) The dark-blue ball should bounce off the light-blue ball, but instead tunnels through it.
    Additional qualitative examples, including side-by-side comparisons with autoregressive generations, are shown in Appendix~\ref{qualitative}.}
  \label{fig:fails}
  \vspace{-1.0em}
\end{figure}

\paragraph{Qualitative failure modes}
\label{fails}

We observe three recurring state-consistency failures, illustrated in Figure~\ref{fig:fails}: tunneling, ball-count violations, and color-identity violations. In tunneling, a ball passes through another object or disappears and reappears elsewhere without a physically valid intermediate trajectory. In ball-count violations, two balls merge into one or one ball splits into multiple balls. In color-identity violations, a ball changes color, duplicates another ball's color, or swaps its color identity with another ball. These failures can occur in both bidirectional and autoregressive generations, but they are much more frequent in bidirectional models, especially for high-velocity balls and long multi-ball rollouts.

In the backbone depth-width ablation, all model sizes produce visually plausible videos, but smaller and especially shallower backbones exhibit these state-consistency failures more often.

\section{Why Denoising Steps Do Not Provide Scalable Serial Computation}
\label{theory}

The observations above are puzzling. As videos get longer, bidirectional diffusion models already receive roughly $\mathcal{O}(f^2)$ inference compute from self-attention, while the ground-truth simulator requires only $\mathcal{O}(f)$ computation. Moreover, because our main metric is local accuracy, the degradation cannot be explained merely by long-horizon drift. The issue is therefore not simply a lack of total compute, but a failure to convert that compute into scalable serial computation for predicting dependent events.

Through a complexity-theoretic lens, this is not so puzzling. Video diffusion models are asked to solve an inherently serial problem, but their denoising steps do not provide scalable serial computation. To make this precise, we first introduce the relevant complexity classes.

At the \emph{parallelizable} end, the class $\mathsf{TC}^0$ captures polynomial-size, constant-depth threshold circuits; informally, these are computations closer to a closed-form shortcut than to a step-by-step simulation. 

\begin{wrapfigure}{r}{0.48\linewidth}
    \centering
    \includegraphics[width=\linewidth]{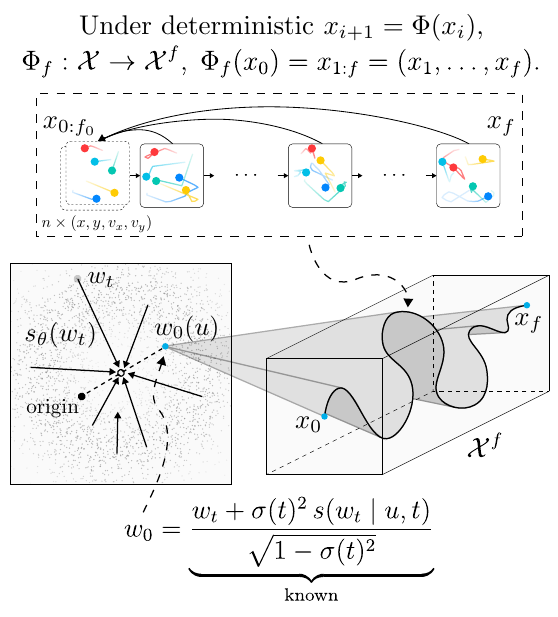}
    \caption{\textbf{Theory overview.}
Deterministic dynamics map an initial state $s_0$ to a unique future trajectory through repeated transitions $s_{i+1}=\Phi(s_i)$.
Given $s_0$, the conditional distribution over futures is therefore a point mass.
Under Gaussian noising, the exact conditional score at any fixed noise level points back to this unique future, so one score-network evaluation can recover it (\cref{eq:one-step-w0-solution}).
Because the clean future is recovered from a single score-network evaluation, denoising steps do not add scalable serial computation beyond what the backbone already computes.}
    \label{fig:theory-overview}
    \vspace{-3.0em}
\end{wrapfigure}

At the \emph{serial} end, $\mathsf{P}$-complete problems capture polynomial-time computations believed not to be efficiently parallelizable.\footnote{Here, ``efficiently parallelizable'' means solvable with polynomial total resources, such as polynomial-size circuits or polynomially many processors. Without this restriction, serial depth can often be reduced by an exponential blow-up in width, which is not considered an efficient parallel algorithm \citep{Greenlaw_1995}. The separation from $\mathsf{P}$-complete problems relies on standard conjectures such as $\mathsf{P}\neq\mathsf{NC}$, which are widely believed but unproved.}

In our case, the multi-ball task shares the elastic-collision primitive with classical $\mathsf{P}$-complete billiard-ball computation~\citep{Fredkin_1982,Greenlaw_1995}. Meanwhile, fixed-depth transformer backbones are bounded-depth parallel computations under standard precision assumptions~\citep{Merrill_2023}. 
Thus, a fixed-depth backbone cannot be the source of serial computation that grows with the task. The natural hope is that repeated denoising steps supply the missing serial computation; we now show why this does not happen for deterministic video prediction.

\paragraph{Deterministic video prediction}
If we assume that the physical process is \emph{deterministic}, then, conditional on the initial state of the system, the set of valid physical trajectories is a \emph{single point} in the space of all possible trajectories, because there is only one correct trajectory starting from that point, no matter how long the trajectory is.

Given a single initial frame of a video, we may not know the exact initial state of the system. For example, we may only observe the positions but not the velocities of bouncing balls. However, in that case, we need to observe just 2 frames to determine both the position and the velocity of each ball, and thus determine the state of the system. In general, if we can determine the exact state of the system after observing $\mathcal{O}(1)$ frames, there is only one valid video trajectory from then on. This condition is known as \emph{full observability}.

Under these assumptions, the video prediction problem reduces to computing a deterministic function:
$$
\mathcal{O}(1) \text{ initial frames} \xrightarrow f \text{ future frames}.
$$
We will formally set up two problems, one about video prediction and one about state prediction. We then argue:

\begin{enumerate}
    \item If the video prediction problem is solvable by a diffusion model with a $\mathsf{TC}^0$ backbone, then its underlying state prediction problem is $\mathsf{TC}^0$.
    \item Bouncing balls and, indeed, most state prediction problems are $\mathsf{P}$-complete and so, under standard assumptions in computational complexity theory, are not in $\mathsf{TC}^0$.
    \item Thus, most video prediction problems are not solvable by a diffusion model with a $\mathsf{TC}^0$ backbone.
\end{enumerate}

From this perspective, the deficiency of a diffusion model is that it converges in $\mathcal{O}(1)$ serial steps, despite expending $\mathcal{O}(f^2)$ parallel compute per serial step. In contrast, an autoregressive model spends $\mathcal{O}(f)$ serial steps, sufficient to solve $\mathsf{P}$-complete problems.

\subsection{Problem Formalism}

Suppose we are given a \emph{deterministic} process. Let the state space be $X$, and let the deterministic time evolution be $\Phi: X \to X$. We have an observation function $F: X \to Y$, which takes a state $x \in X$ and converts it into an image $F(x) \in Y$. Here, $Y$ is the space of possible images. For example, if we consider an image to be an $H \times W$ array of pixels, then $Y = \R^{H \times W}$.

Suppose that the physical process is \emph{fully observable}. That is, we assume that there exists a constant number $f_0$ such that, given any two trajectories
$$
x_0, x_1 = \Phi(x_0), x_2 = \Phi(x_1), \dots; \quad 
x_0', x_1' = \Phi(x_0'), x_2' = \Phi(x_1'), \dots,
$$

if $F(x_0) = F(x_0'), F(x_1) = F(x_1'), \dots F(x_{f_0}) = F(x_{f_0}')$, then $x_0 = x_0'$. 

In other words, we can fully determine $x_0$ if we observe $f_0+1$ consecutive frames $F(x_0), F(x_1), \dots, F(x_{f_0})$.

We also assume that observability is computable in constant time. That is, we have a constant-time algorithm that, given $F(x_{0:f_0})$, returns $x_{0:f_0}$. In our case of bouncing balls, this is clear: given two frames, we can find the velocity of the balls by taking their difference.

The key point is simple. For deterministic video prediction, once the conditioning frames determine the state, there is only one correct future video. After Gaussian noising, the conditional distribution remains centered on that unique future, so the exact score at any fixed noise level contains enough information to recover it in one step. Repeated denoising therefore does not add scalable serial computation beyond what the backbone already provides, as summarized in Figure~\ref{fig:theory-overview}.

Given this setup, we can define two formal problems:
\begin{enumerate}
    \item \textbf{Video segment prediction}: given an initial video segment $F(x_{0:f_0})$ of a physical trajectory and a number $f$, compute its future video segment $F(x_{f_0+1:f})$.
    \item \textbf{State prediction}: given an initial physical trajectory $x_{0:f_0}$ and a number $f$, compute $x_f$.
\end{enumerate}

Suppose further that we have a diffusion model that solves the following problem: given an initial video segment $y_{0:f_0} := F(x_{0:f_0})$ of a physical trajectory and a number $f$, compute its future video trajectory $y_{f_0+1:f} := F(x_{f_0+1:f})$.

\subsection{Diffusion Modeling}

Now, we specify what diffusion modeling means. We use the noise-conditional score-matching network (NCSN) formalism. Diffusion models in other formalisms can be converted to an equivalent NCSN diffusion model \citep{luo2022understandingdiffusionmodelsunified}. The full formalism is in Appendix~\ref{app:theory}.

Diffusion modeling is a technique to sample from a conditional probability distribution. Let $w$ be an object we want to sample, and let $u$ be a conditioning input. $w$ is an element of a sample space, here taken to be $\R^d$, where $d$ is some positive integer.\footnote{We picked the letters $w, u$ only to avoid confusion, since we have already assigned the letters $x, y$.} For example, in our case of video modeling, the condition is $y_{0:f_0}$, the initial video segment, and the object we want to sample is $y_{f_0+1:f}$, the rest of the video, within the sample space $\R^{H \times W \times (f-f_0)}$.

Now, we define a forward diffusion process. We use the standard Gaussian diffusion process. The process is defined by a noise rate function $\beta: [0, \infty) \to (0, \infty)$, with the condition that $\int_0^\infty \beta(t) dt = \infty$. We first sample a starting point $w_0 \sim \rho_0(w|u)$ and then incrementally add white noise at a rate of $\beta(t)$. The accumulated effect is that, at $t$, the noised sample has the following distribution:
\begin{equation}
w_t = \sqrt{1-\sigma(t)^2} w_0 + \sigma(t) z, \quad z \sim \mathcal N(0, I_d),
\end{equation}

where $\sigma(t)^2 = 1- e^{-\int_0^t \beta(\tau) d\tau}$ is the variance due to accumulated noise.

In other words, the probability distribution of $w_t$, conditional on $u$, is the convolution of a scaled version of $\rho_0(\cdot | u)$ with a Gaussian function. The probability distributions $\rho(\cdot | u, t)$ are increasingly blurry versions of the scaled original $\rho_0(\cdot | u)$, starting at $\rho(\cdot | u, 0) = \rho_0(\cdot | u)$ and ending with $\lim_{t\to\infty} \rho(\cdot | u, \infty) = \rho_{\mathcal N(0, I_d)}$, where we lose all traces of the original and obtain the standard Gaussian distribution.

Define the score function 
\begin{equation}
    s(w_t | u, t) := \nabla_{w_t} \log \rho(w_t | u, t).
    \label{eq:score_fn}
\end{equation}

The score function can be used to define a backward stochastic process, starting from $\mathcal N(0, I_d)$ and ending with the original $\rho_0(\cdot|u)$. In the NCSN formalism, we use a neural network $s_\theta$ to approximate $s$. This network is the titular ``noise conditional score network'' (NCSN). We then approximate the backward process by numerical stochastic integration.

The exact score function $s$ has no closed-form solution in general, because $\rho_0(w_0 | u)$ is too generic to be integrated. However, if $w_0$ is a \textit{deterministic} function of $u$, then the score function has a closed-form solution, which allows us to solve for $w_0(u)$ in one step:
\begin{equation}\label{eq:one-step-w0-solution}
    w_0(u) = \frac{\sigma(t)^2 s(w_t | u, t) + w_t}{\sqrt{1-\sigma(t)^2}}.
\end{equation}

In particular, this means that if we have a neural network that computes the score function, we can use it to compute $w_0(u)$ for \textit{any} choice of $t > 0$ and \textit{any} choice of $w_t$. Intuitively, this is possible because the flows in the corresponding flow-matching problem are just straight lines and can therefore be solved in one step without evaluating any integrals (Figure~\ref{fig:theory-overview}).

Now we can frame a video diffusion model in this formalism. Its backbone network takes as inputs an initial video segment $y_{0:f_0}$, a positive integer $f > f_0$, a noise level $t$, and a noisy future video segment $\hat y_{f_0+1:f}$. It outputs the noise-conditional score:
\begin{equation}
    s(\hat y_{f_0+1: f} | y_{0:f_0}, f, t) = \nabla_{\hat y_{f_0+1: f}} \log \rho(\hat y_{f_0+1: f} | y_{0:f_0}, t).
\end{equation}

\begin{proposition}\label{prop1}
If a diffusion model solves the video prediction problem and its NCSN $s_\theta$ is a $\mathsf{TC}^0$ network computing the exact score~\eqref{eq:score_fn}, then the state prediction problem is in $\mathsf{TC}^0$.
\end{proposition}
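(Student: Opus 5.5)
We give a reduction: we build a constant-depth threshold circuit family for state prediction out of the $\mathsf{TC}^0$ score network, by composing it with constant-depth pre- and post-processing. There are three pieces.

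\textbf{Step 1 (one-step recovery).} Since the process is deterministic and fully observable, the conditional law $\rho_0(\cdot\mid u)$ with $u=y_{0:f_0}$ is a point mass at $w_0(u)=y_{f_0+1:f}$. The noised law is then Gaussian with mean $\sqrt{1-\sigma(t)^2}\,w_0(u)$ and covariance $\sigma(t)^2 I_d$, so the exact score is
\[
s(w_t\mid u,t)=-\frac{w_t-\sqrt{1-\sigma(t)^2}\,w_0(u)}{\sigma(t)^2}.
\]
Rearranging gives \eqref{eq:one-step-w0-solution}, valid for every $t>0$ and every $w_t$. We fix a convenient noise level $t^\ast$ with $\sigma(t^\ast)^2$ a fixed dyadic constant, say $1/2$, and set $w_t=0$. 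Then
\[
w_0(u)=\frac{\sigma(t^\ast)^2\, s_\theta(0\mid u,f,t^\ast)}{\sqrt{1-\sigma(t^\ast)^2}},
\]
which is a single evaluation of $s_\theta$ followed by multiplication by a constant. Multiplication by a fixed constant at the working precision is in $\mathsf{TC}^0$, since iterated addition and multiplication are in $\mathsf{TC}^0$. Hence the map $(u,f)\mapsto y_{f_0+1:f}$ is computed in $\mathsf{TC}^0$.

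\textbf{Step 2 (sandwich with observation maps).} A state-prediction instance consists of $x_{0:f_0}$ and $f$. We first render $u=F(x_{0:f_0})$. We assume, as for bouncing balls, that $F$ is computable in $\mathsf{TC}^0$: each pixel is a threshold test on distances to ball centers. Step 1 then gives $y_f=F(x_f)$. Finally we decode $x_f$ from the predicted frames. Full observability applied to the trajectory starting at $x_{f-f_0}$ says the last $f_0+1$ frames $F(x_{f-f_0:f})$ determine $x_{f-f_0}$, and hence $x_f$ after $f_0$ applications of $\Phi$. The paper assumes this inversion is constant-time, so it is in particular a constant-depth circuit, and $\Phi$ is applied only a constant number $f_0$ of times. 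We therefore obtain $x_f$. If $f-f_0\le f_0$, the instance is trivially handled by a constant-depth table. The full pipeline is a composition of a constant number of $\mathsf{TC}^0$ circuits, and $\mathsf{TC}^0$ is closed under such composition, so state prediction is in $\mathsf{TC}^0$.

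\textbf{Main obstacle.} The hard part is making the uniformity and precision bookkeeping honest rather than the algebra. I would handle it as follows.
\begin{itemize}
\item The circuit family must be indexed by $f$, and the output length grows with $f$. I would use the paper's convention that the backbone takes $f$ as input, and its size is polynomial in $f$ and in the input size.
\item The exact score is real-valued, so I would adopt the standard finite-precision assumption of \citep{Merrill_2023}. Under it, the $\mathsf{TC}^0$ network outputs the score to polynomially many bits, and dividing by the constant $\sqrt{1-\sigma^2}$ preserves this precision.
\item Decoding $x_f$ from pixel values requires rounding errors to stay below the gap separating distinct renderings. I would argue this using the discreteness of pixel outputs: rendered images lie in a finite set, so the rounding step reduces to a threshold comparison.
\end{itemize}
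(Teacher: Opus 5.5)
Your proposal is correct and follows essentially the paper's proof: one exact-score evaluation at a fixed noise level with the noisy input set to zero, closed-form recovery of $y_{f_0+1:f}$ from the Gaussian score of the point mass, and constant-time inversion of the last $f_0+1$ frames to output $x_f$; you also make explicit two things the paper leaves implicit, namely rendering $F(x_{0:f_0})$ and the precision bookkeeping. Two small fixes: the paper's assumed inverter returns the whole window $x_{f-f_0:f}$, so read off $x_f$ directly instead of applying $\Phi$ (which would need $\Phi \in \mathsf{TC}^0$, not assumed); and for $f\le 2f_0$, take the window from the concatenation $y_{0:f}$ rather than a lookup table, which would not be polynomial-size over real-valued states.
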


\begin{proof}
We solve the state prediction problem by an algorithm that calls the NCSN just once. The entire algorithm runs in $\mathsf{TC}^0$. The details are in Appendix~\ref{app:theory}.
\end{proof}

\textit{Comment.} Is it unrealistic to assume that the score function is matched exactly? We use exact score matching to present the core argument as clearly as possible, but the conclusion does not require exact equality. Even when $s_\theta \approx s$, we can still prove that the state prediction problem is in $\mathsf{TC}^0$ given a sufficiently strong upper bound on $|s_\theta-s|$; see~\cite[App.~F]{liu2026serialscalinghypothesis}.

The diffusion models considered here are trained to approximate the score, so large score errors reflect optimization failure or insufficient model capacity. Scaling model size and training compute should improve score estimation. This makes our proposition more relevant at scale.

On a positive note, our result points toward a different class of iterative generative models. Rather than training each iteration only to approximate the score, such models could train their intermediate states to perform and preserve useful computation across iterations. Developing objectives that enable this behavior is an open direction.

\section{Discussion and Limitations}

\paragraph{The testbed isolates dependent-event prediction}
Hard-sphere dynamics is a diagnostic abstraction: by removing visual realism, semantic ambiguity, stochasticity, and open-world uncertainty, it isolates a single question: can a video diffusion model track a growing chain of dependent events?
The seriality gap is therefore not about bouncing balls or physics specifically, but about a broader class of tasks whose predictions require serial chains of deterministic transitions through intermediate states.

\paragraph{Autoregression is not a universal fix}
Autoregression helps in our setting because the dependencies are mostly temporal: each short future block depends on the recent past, so serial generation matches the causal order.
Other visual reasoning tasks need not factor this way.
Visual-reasoning studies instead find early global planning or progressive refinement across denoising steps \citep{wiedemer2025videomodelszeroshotlearners,newman2026videomodelsreasonearly,wang2026demystifying}; such tasks need not factor into frame-aligned transitions.
Autoregression helps when its factorization aligns with task dependency structure; otherwise, the needed serial computation must come from depth, search, planning, or another mechanism.

Moreover, even if more serial computation is supplied, it does not guarantee that a model will use it to track event dependencies. In natural video, finite model capacity must also represent appearance and stochasticity. These competing demands may dominate learning, allowing models--autoregressive or bidirectional--to favor visually plausible outputs over faithful dependent-event prediction. Our controlled testbed removes these confounds and therefore establishes a clear computational advantage, not a guarantee of success on unconstrained real-world video.

\paragraph{Limitations} 
Our theory applies to deterministic video prediction and is one-way: it does not characterize every setting in which denoising steps fail to provide scalable serial computation.
Stochasticity or noise can break the assumptions, but this does not imply diffusion models acquire scalable serial computation.
Indeed, our vibrating-wall hard-sphere dynamics variant in Appendix~\ref{app:noisy-hard-sphere} shows the same qualitative pattern: even when the deterministic assumption is violated, autoregressive generation remains more locally consistent than bidirectional denoising.
This suggests that violating the theorem's assumptions does not necessarily remove the seriality gap.

\section{Related Work}
\label{sec:related_work}

\paragraph{Video models as simulators and reasoners}
Video generators are increasingly framed as more than synthesis systems: they are proposed as world simulators, physical reasoners, and visual planning models \citep{videoworldsimulators2024,wan2025wanopenadvancedlargescale,wiedemer2025videomodelszeroshotlearners,wang2026vbvr,NEURIPS2025_f0552f14}. This has led to benchmarks that test physical consistency, temporal coherence, and physical commonsense, including PhyGenBench, T2VPhysBench, Physics-IQ, Morpheus, VBench, and VBench2 \citep{meng2024worldsimulatorcraftingphysical,guo2025t2vphysbenchfirstprinciplesbenchmarkphysical,motamed2025generativevideomodelsunderstand,raedsch2026physicsiqverified,tragoudaras2026evaluatingnewtonianmechanicsvideo,Huang_2024_CVPR,zheng2025vbench20advancingvideogeneration}. Recent maze-solving studies push the same view toward visual planning and reasoning \citep{wiedemer2025videomodelszeroshotlearners,newman2026videomodelsreasonearly}. Synthetic physics environments provide a complementary way to remove visual realism as a confound: \citet{pmlr-v267-kang25g} study video diffusion models in the PHYRE physical simulator \citep{NEURIPS2019_4191ef5f}, showing that models can struggle with physical laws even in visually simple settings. Our hard-sphere dataset follows this controlled-evaluation philosophy, but focuses on the question: can video diffusion models track a chain of dependent events when the ground truth is unambiguous and visual realism is not a confound?

\paragraph{Video diffusion architectures and inference paradigms}
Modern video generators commonly apply latent diffusion to spacetime tensors using transformer or U-Net backbones with temporal attention \citep{NEURIPS2022_39235c56,blattmann2023stablevideodiffusionscaling,Bar_Tal_2024,wan2025wanopenadvancedlargescale}. The dominant inference paradigm is bidirectional denoising: all frames are refined jointly by a bounded-depth backbone with access to both past and future frames. Recent work explores autoregressive and sliding-window diffusion for streaming and long-horizon generation \citep{Weng_2024,Xie_2025_CVPR,Yin_2025_CVPR}. We treat these inference choices not only as engineering tradeoffs, but as different ways of allocating serial computation across time.

\paragraph{Other temporal bottlenecks in video diffusion}
Prior work targets temporal bottlenecks through noise rescheduling \citep{Gupta_2024,qiu2024freenoisetuningfreelongervideo}, joint appearance--motion representations \citep{pmlr-v267-chefer25a}, temporal regularization \citep{chen2025temporalregularizationmakesvideo}, or preservation of few-step motion priors during refinement \citep{han2026phaselock}. These methods improve temporal structure; our focus is different: better priors may improve coherence but do not address bidirectional denoising's lack of scalable serial computation for long dependency chains.

Our explanation connects these empirical failures to prior work on transformer parallelism, depth--width tradeoffs, and serial scaling, which argues that model depth, width, and denoising steps are not interchangeable computational resources \citep{Merrill_2023,pmlr-v49-telgarsky16,chen2024theoreticallimitationsmultilayertransformer,liu2026serialscalinghypothesis}.

\begin{ack}

We thank Alexei Efros for his constant efforts to debate with us.
We thank Nicolas Dufour for suggestions for improving experimental rigor.
We thank Amil Dravid for his suggestions to add ``more dots'' to the experiments section.
We thank other members of Berkeley AI Research for helpful discussions.
KP and YB are supported by ONR MURI.

\end{ack}

{\small
\bibliographystyle{unsrtnat}
\bibliography{main}
}

\appendix

\newpage
\section{Theoretical Details}\label{app:theory}

As before, let $w$ be an object we want to sample, and let $u$ be a conditioning input. $w$ is an element of a sample space, here taken to be $\R^d$, where $d$ is some positive integer.

The forward diffusion process is defined by a noise rate function $\beta: [0, \infty) \to (0, \infty)$, with the condition that $\int_0^\infty \beta(t) dt = \infty$. We begin by sampling a starting point $w_0 \sim \rho_0(w|u)$ and then define the forward stochastic process:
$$
w_{t + dt} = w_t - \underbrace{\frac 12 \beta(t) w_t dt}_{\text{drift}} + \underbrace{\sqrt{\beta(t)} dW_t}_{\text{noise}},
$$
for all $t \in [0, \infty)$, where $W_t$ is a standard Brownian motion in $\R^d$. The $W$ stands for ``Wiener'', since this is also called the Wiener process. It can be schematically thought of as $dW_t \sim \mathcal N(0, dt I_d)$.

This is a stochastic differential equation with a closed-form solution:
$$
w_t = \sqrt{1-\sigma(t)^2} w_0 + \sigma(t) z, \quad z \sim \mathcal N(0, I_d),
$$
where $\sigma(t)^2 = 1- e^{-\int_0^t \beta(\tau) d\tau}$ is the variance due to accumulated noise.

In other words, the probability distribution of $w_t$, conditional on $u$, is the convolution of a scaled version of $\rho_0(\cdot | u)$ with a Gaussian function. The probability distributions $\rho(\cdot | u, t)$ are increasingly blurry versions of the original $\rho_0(\cdot | u)$, starting at $\rho(\cdot | u, 0) = \rho_0(\cdot | u)$ and ending with $\lim_{t\to\infty} \rho(\cdot | u, \infty) = \rho_{\mathcal N(0, I_d)}$, where we lose all traces of the original and obtain the standard Gaussian distribution.

Define the score function 
$$
s(w_t | u, t) := \nabla_{w_t} \log \rho(w_t | u, t).
$$

With the score function, the same process can be expressed in the time-reversed form \citep{Anderson_1982}:
$$w_{t-dt} = w_t + \frac 12 \beta(t) w_t dt + \beta(t) \underbrace{s(w_t | u, t)}_{\text{score function}} dt + \sqrt{\beta(t)} dW_t.$$

In the NCSN formalism, we have a neural network $s_\theta \approx s$. We then solve the backward process by numerical stochastic integration:
$$\hat w_{t-\Delta t} = \hat w_t + \frac 12 \beta(t)\hat w_t \Delta t + \beta(t) s_\theta(\hat w_t | u, t)\Delta t + \sqrt{\beta(t)} \Delta W_t.$$

The exact score function $s$ has no closed-form solution in general, because $\rho_0(w_0 | u)$ is too generic to be integrated. However, if $w_0$ is a \textit{deterministic} function of $u$, then
$$
\begin{aligned}
\log\rho(w_t | u, t) &= -\frac{1}{2\sigma(t)^2}\left\|w_t - \sqrt{1- \sigma(t)^2}w(u)\right\|^2 + \mathrm{Const}
\\
s(w_t | u, t) &= \frac{\sqrt{1-\sigma(t)^2} w_0(u) - w_t}{\sigma(t)^2}.
\end{aligned}$$

Thus, we are actually able to use the score function to solve for $w_0(u)$ in one step:
$$
w_0(u) = \frac{\sigma(t)^2 s(w_t | u, t) + w_t}{\sqrt{1-\sigma(t)^2}}.
$$

\textit{Proof of \cref{prop1}.}
\begin{enumerate}
    \item Assign $t$ to be any constant. Any constant will be fine. For example, $t := 1$ will do.
    \item Assign $\hat y_{f_0+1: f, t}$ to be any constant. Any constant will be fine. For example, this will do:
    $$\hat y_{f_0+1: f, t} = (0, 0, \dots, 0),$$
    where $0$ stands for a blank image.
    \item Compute the score
    $$s_t := s(\hat y_{f_0+1: f} | y_{0:f_0}, f, t)$$
    using the NCSN backbone.
    \item Solve for the future video segment
    $$
    y_{f_0+1: f} := \frac{\sigma(t)^2 s_t + \hat y_{f_0 + 1:f, t}}{\sqrt{1-\sigma(t)^2}}.
    $$
    \item Invert the last segment of the video trajectory to its state:
    $$
    x_{f-f_0:f} = F^{-1}(y_{f-f_0:f}).
    $$
    \item Output $x_f$.
\end{enumerate}
By assumption, the NCSN backbone is a $\mathsf{TC}^0$ network, and every other step also runs in $\mathcal{O}(1)$ serial time. Thus, the algorithm is in $\mathsf{TC}^0$, and therefore the state prediction problem is also in $\mathsf{TC}^0$. \qed

\section{Numerical Results}
\label{app:numerical-results}

\begin{table}[h]
\centering
\caption{\textbf{Results under different video lengths and inference factorizations.}
Physical accuracy and visual fidelity for bidirectional, blockwise, and autoregressive diffusion models on $n=5$ hard-sphere dynamics videos spanning a range of lengths $f$.
Lower $\Delta x^{(\mathrm{GT})}$, $\Delta x^{(1)}$, and $\Delta x^{(5)}$ indicate higher physical accuracy; higher IoU indicates higher visual fidelity.
The results support \cref{obs:one,obs:three}: bidirectional models degrade with longer chains of dependent events, while temporal serialization improves local physical accuracy.}
\begin{tabular}{ccccccccc}
\toprule
Type & $d_{\text{model}}$ & $n_{\text{layers}}$ & $n_\text{params}$ & TFLOPs & $\Delta x^{(\text{GT})}$ $\downarrow$ & $\Delta x^{(1)}$ $\downarrow$ & $\Delta x^{(5)}$ $\downarrow$ & IoU $\uparrow$ \\

\midrule

\rowcolor{gray!10}
\multicolumn{9}{l}{\textbf{$f = 25$ frames, $n = 5$ balls}} \\
\addlinespace[2pt]
Bidir & 768  & 30 & 217M & 10.4 & 1.441 & 0.197 & 0.799 & 0.703 \\
Bidir & 1536 & 30 & 867M & 39.9 & 1.265 & 0.160 & 0.670 & 0.726 \\
AR    & 768  & 30 & 217M & 7.3  & 1.179 & 0.166 & 0.664 & 0.714 \\
AR & 1536 & 30 & 867M & 28.1 & 1.042 & 0.148 & 0.613 & 0.730 \\

\midrule

\rowcolor{gray!10}
\multicolumn{9}{l}{\textbf{$f = 33$ frames, $n = 5$ balls}} \\
\addlinespace[2pt]
Bidir & 768  & 30 & 217M & 13.8 & 2.047 & 0.210 & 0.873 & 0.693 \\
Bidir & 1536 & 30 & 867M & 52.0 & 1.944 & 0.164 & 0.718 & 0.725 \\
AR    & 768  & 30 & 217M & 10.3 & 1.905 & 0.175 & 0.712 & 0.705 \\
AR    & 1536 & 30 & 867M & 39.6 & 1.645 & 0.142 & 0.604 & 0.735 \\

\midrule

\rowcolor{gray!10}
\multicolumn{9}{l}{\textbf{$f = 41$ frames, $n = 5$ balls}} \\
\addlinespace[2pt]
Bidir & 768  & 30 & 217M & 17.2 & 2.581 & 0.227 & 0.952 & 0.696 \\
Bidir & 1536 & 30 & 867M & 64.4 & 2.402 & 0.200 & 0.856 & 0.698 \\
AR    & 768  & 30 & 217M & 13.4 & 2.423 & 0.167 & 0.695 & 0.712 \\
AR    & 1536 & 30 & 867M & 51.3 & 2.165 & 0.142 & 0.606 & 0.732 \\

\midrule

\rowcolor{gray!10}
\multicolumn{9}{l}{\textbf{$f = 49$ frames, $n = 5$ balls}} \\
\addlinespace[2pt]
Bidir & 768 & 30 & 217M & 20.9 & 2.923 & 0.261 & 1.055 & 0.691 \\
Bidir & 1536 & 30 & 867M & 77.0 & 2.868 & 0.205 & 0.883 & 0.704 \\
Block-3 & 768 & 30 & 217M & 16.9 & 2.749 & 0.182 & 0.762 & 0.713 \\
Block-3 & 1536 & 30 & 867M & 63.7 & 2.675 & 0.174 & 0.716 & 0.705 \\
Block-2 & 768 & 30 & 217M & 16.7 & 2.717 & 0.169 & 0.708 & 0.715 \\
Block-2 & 1536 & 30 & 867M & 63.4 & 2.578 & 0.145 & 0.620 & 0.731 \\
AR & 768 & 30 & 217M & 16.6 & 2.798 & 0.178 & 0.739 & 0.703 \\
AR & 1536 & 30 & 867M & 63.1 & 2.548 & 0.142 & 0.607 & 0.733 \\

\bottomrule
\end{tabular}
\end{table}

\begin{table}[h]
\centering
\caption{\textbf{Single-ball control as a non-serial setting.}
Physical accuracy and visual fidelity for bidirectional and autoregressive diffusion models on $n=1$ hard-sphere dynamics videos of different lengths $f$.
Unlike the multi-ball setting, longer single-ball videos do not introduce longer chains of dependent events.
The results support \cref{obs:two}: the bidirectional--autoregressive gap largely disappears without serial dependence.}
\begin{tabular}{ccccccccc}
\toprule
Type & $d_{\text{model}}$ & $n_{\text{layers}}$ & $n_{\text{params}}$ & TFLOPs
& \makecell{$\Delta x^{(\mathrm{GT})} \downarrow$\\$n=1$}
& \makecell{$\Delta x^{(5)} \downarrow$\\$n=1$}
& \makecell{$\Delta x^{(5)} \downarrow$\\$n=5$}
& \makecell{IoU $\uparrow$\\$n=1$} \\
\midrule

\rowcolor{gray!10}
\multicolumn{9}{l}{\textbf{$f = 25$ frames}} \\
\addlinespace[2pt]
Bidir & 768  & 30 & 217M & 10.4 & 0.310 & 0.220 & 0.799 & 0.823 \\
AR    & 768  & 30 & 217M & 7.3  & 0.162 & 0.201 & 0.664 & 0.819 \\

\midrule

\rowcolor{gray!10}
\multicolumn{9}{l}{\textbf{$f = 49$ frames}} \\
\addlinespace[2pt]
Bidir & 768  & 30 & 217M & 20.9 & 1.417 & 0.246 & 1.055 & 0.810 \\
AR    & 768  & 30 & 217M & 16.6 & 0.471 & 0.220 & 0.739 & 0.815 \\

\bottomrule
\end{tabular}
\end{table}

\begin{table}[h]
\centering
\caption{\textbf{More denoising steps do not close the seriality gap.}
Local physical accuracy for bidirectional diffusion models with $T \in \{10,20,50,100,200\}$ denoising steps on $n=5$ hard-sphere dynamics videos of different lengths $f$.
Performance plateaus after a moderate number of steps, while longer videos remain more difficult.
The results support \cref{obs:four}: denoising depth does not provide scalable serial computation for longer chains of dependent events.}
\begin{tabular}{ccccccccc}
\toprule
Type & $d_{\text{model}}$ & $n_{\text{layers}}$ & TFLOPs
& \makecell{$\Delta x^{(5)} \downarrow$\\$T=10$}
& \makecell{$\Delta x^{(5)} \downarrow$\\$T=20$}
& \makecell{$\Delta x^{(5)} \downarrow$\\$T=50$}
& \makecell{$\Delta x^{(5)} \downarrow$\\$T=100$}
& \makecell{$\Delta x^{(5)} \downarrow$\\$T=200$} \\
\midrule

\rowcolor{gray!10}
\multicolumn{9}{l}{\textbf{$f = 25$ frames, $n=5$ balls}} \\
\addlinespace[2pt]
Bidir & 1536 & 30 & 39.9 & 0.690 & 0.667 & 0.670 & 0.679 & 0.689 \\

\midrule

\rowcolor{gray!10}
\multicolumn{9}{l}{\textbf{$f = 33$ frames, $n=5$ balls}} \\
\addlinespace[2pt]
Bidir & 1536 & 30 & 52.0 & 0.746 & 0.719 & 0.718 & 0.724 & 0.740 \\

\midrule

\rowcolor{gray!10}
\multicolumn{9}{l}{\textbf{$f = 41$ frames, $n=5$ balls}} \\
\addlinespace[2pt]
Bidir & 1536 & 30 & 64.4 & 0.946 & 0.886 & 0.856 & 0.847 & 0.848 \\

\midrule

\rowcolor{gray!10}
\multicolumn{9}{l}{\textbf{$f = 49$ frames, $n=5$ balls}} \\
\addlinespace[2pt]
Bidir & 1536 & 30 & 77.0 & 0.991 & 0.931 & 0.883 & 0.889 & 0.916 \\

\bottomrule
\end{tabular}
\end{table}

\begin{table}[h]
\centering
\caption{\textbf{Depth scaling is more effective than width scaling.}
Physical accuracy and visual fidelity for bidirectional diffusion models with different widths $d_{\mathrm{model}}$ and depths $n_{\mathrm{layers}}$ on $n=5$ hard-sphere dynamics videos.
Depth improves physical accuracy more strongly than width up to about $30$ layers, beyond which further improvements are unclear.
The results support \cref{obs:five}: useful serial computation is supplied more effectively by backbone depth than by width alone.}
\begin{tabular}{ccccccccc}
\toprule
Type & $d_{\text{model}}$ & $n_{\text{layers}}$ & $n_\text{params}$ & TFLOPs & $\Delta x^{(\text{GT})}$ $\downarrow$ & $\Delta x^{(1)}$ $\downarrow$ & $\Delta x^{(5)}$ $\downarrow$ & IoU $\uparrow$ \\

\midrule

\rowcolor{gray!10}
\multicolumn{9}{l}{\textbf{$f = 49$ frames, $n = 5$ balls}} \\
\addlinespace[2pt]
Bidir & 640  & 30 & 151M & 14.9 & 2.973 & 0.307 & 1.172 & 0.665 \\
Bidir & 768  & 20 & 146M & 14.0 & 3.038 & 0.403 & 1.420 & 0.640 \\
Bidir & 768  & 30 & 217M & 20.9 & 2.923 & 0.261 & 1.055 & 0.691 \\
Bidir & 768  & 60 & 430M & 41.7 & 2.839 & 0.233 & 0.950 & 0.696 \\
Bidir & 768  & 90 & 643M & 62.6 & 2.788 & 0.210 & 0.881 & 0.700 \\
Bidir & 1152 & 10 & 169M & 14.8 & 3.076 & 0.416 & 1.466 & 0.653 \\
Bidir & 1152 & 20 & 329M & 29.7 & 2.941 & 0.261 & 1.060 & 0.690 \\
Bidir & 1152 & 30 & 488M & 44.5 & 2.822 & 0.227 & 0.951 & 0.694 \\
Bidir & 1152 & 50 & 807M & 74.2 & 2.739 & 0.251 & 1.000 & 0.676 \\
Bidir & 1536 & 10 & 301M & 25.7 & 2.939 & 0.324 & 1.230 & 0.680 \\
Bidir & 1536 & 30 & 867M & 77.0 & 2.868 & 0.205 & 0.883 & 0.704 \\

\bottomrule
\end{tabular}
\end{table}

\FloatBarrier

\section{Qualitative Results}
\label{qualitative}

\begin{figure}[h]
  \centering

  \begin{subfigure}{\linewidth}
    \centering

    \includegraphics[width=\linewidth]{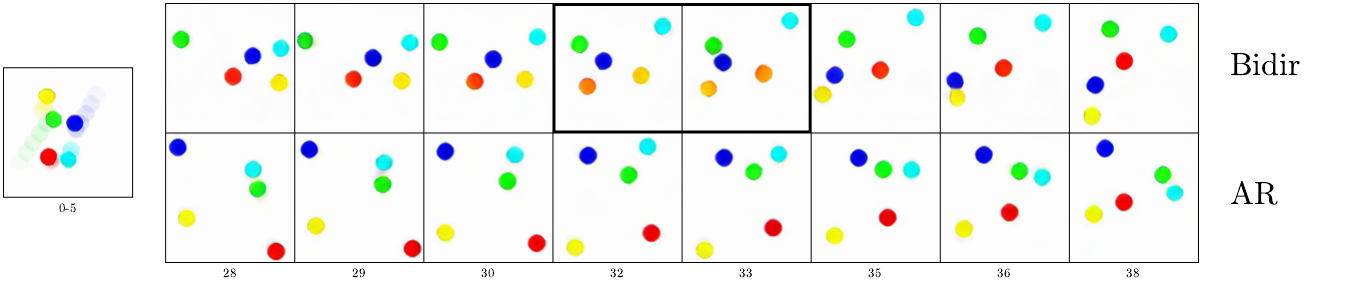}

    \vspace{0.5em}

    \includegraphics[width=\linewidth]{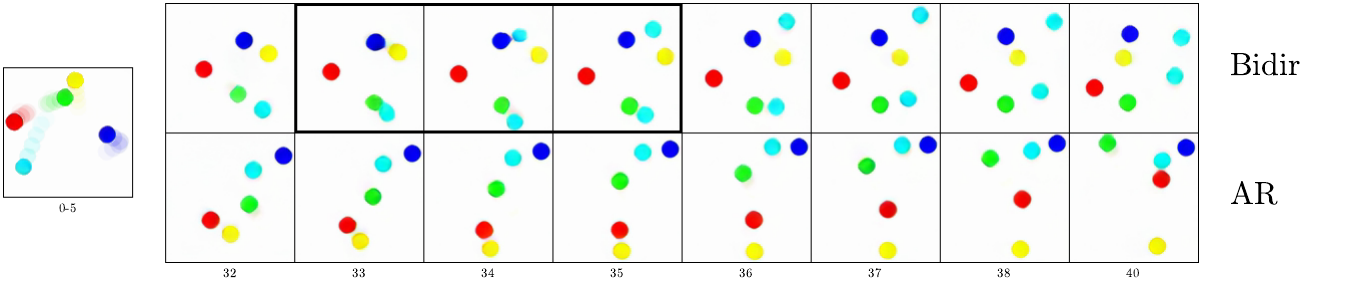}

    \vspace{0.5em}

    \includegraphics[width=\linewidth]{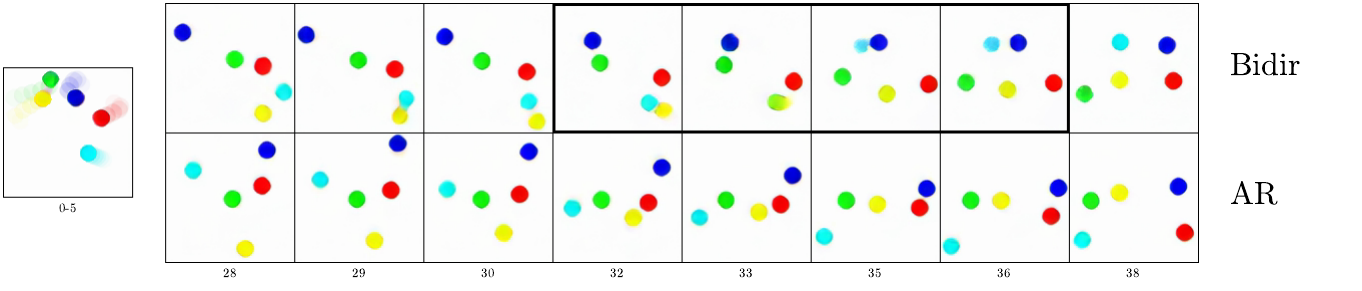}

    \vspace{0.5em}

    \includegraphics[width=\linewidth]{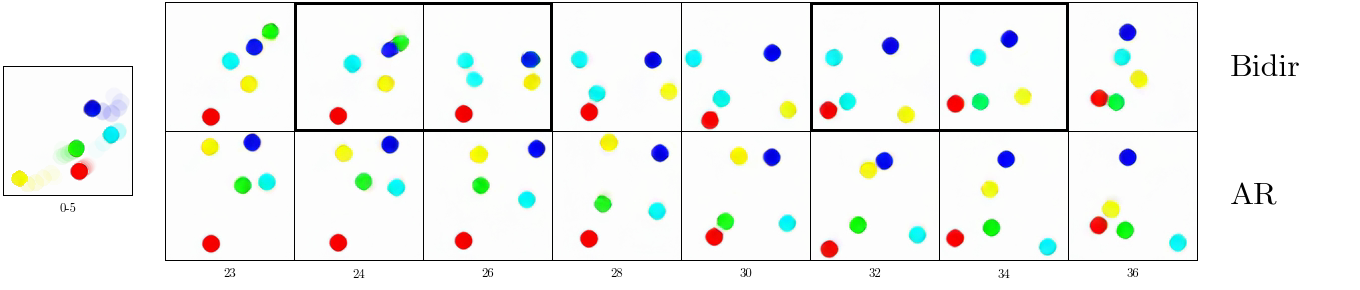}

    \vspace{0.5em}

	    \includegraphics[width=\linewidth]{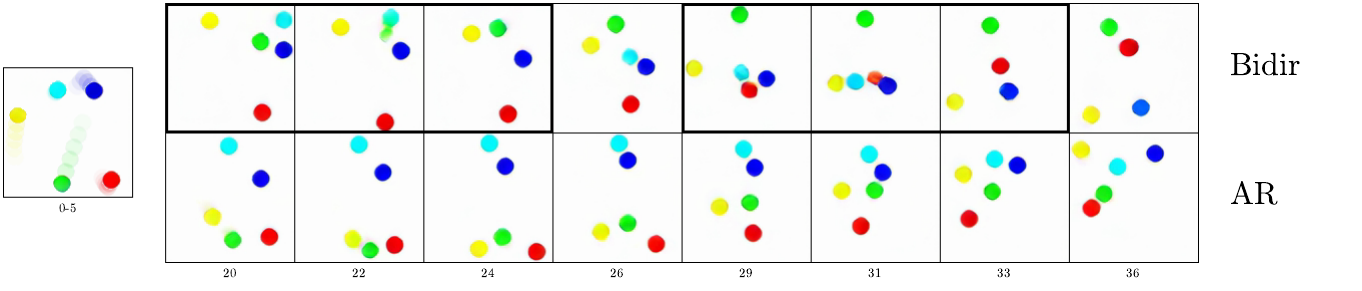}

    \caption{}
    \label{fig:qualitative-a}
  \end{subfigure}

  \caption{\textbf{Qualitative failures.}
  We compare bidirectional and autoregressive outputs from models with $d_{\mathrm{model}}=1536$ and $n_{\mathrm{layers}}=30$.
  The left column shows the shared conditioning frames with motion trails; the remaining columns show $8$ output frames, with frame indices below.
  Thick borders highlight frames where failures appear.
  Bidirectional outputs exhibit state-consistency failures such as tunneling, ball-count violations, and color-identity violations, while the autoregressive counterparts remain more physically consistent.}
  \label{fig:qualitative}
\end{figure}

\clearpage

\begin{figure}[h]\ContinuedFloat
  \centering

  \begin{subfigure}{\linewidth}
    \centering

    \includegraphics[width=\linewidth]{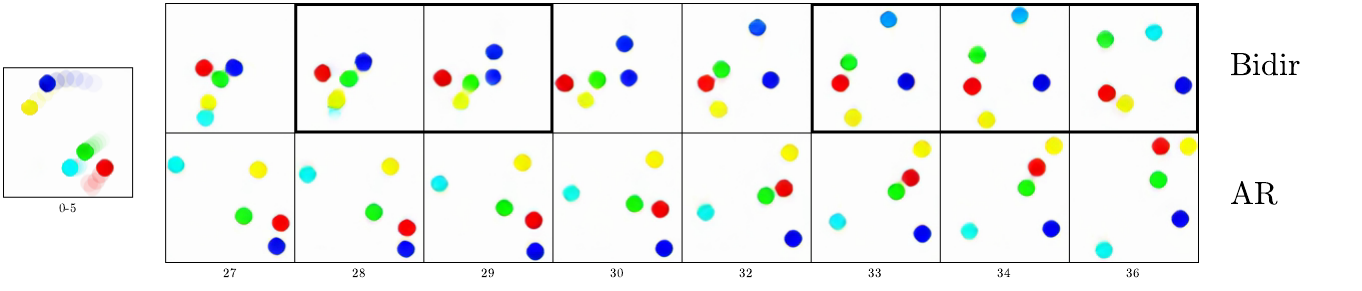}

    \vspace{0.5em}

    \includegraphics[width=\linewidth]{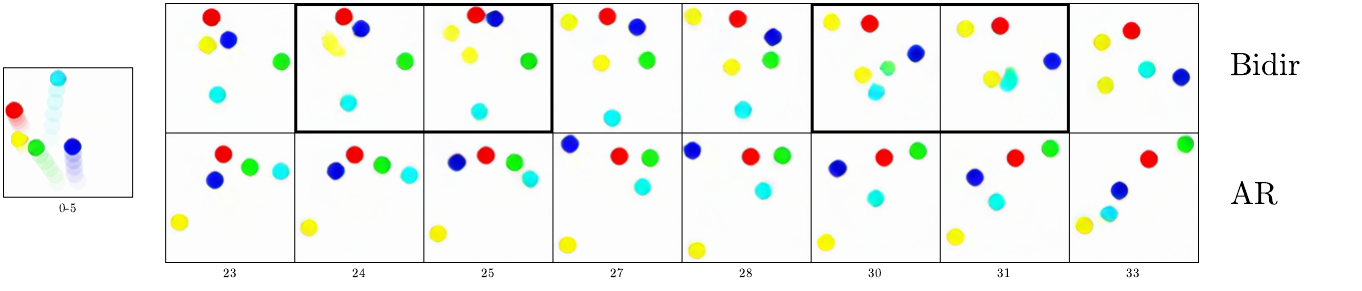}

    \vspace{0.5em}

    \includegraphics[width=\linewidth]{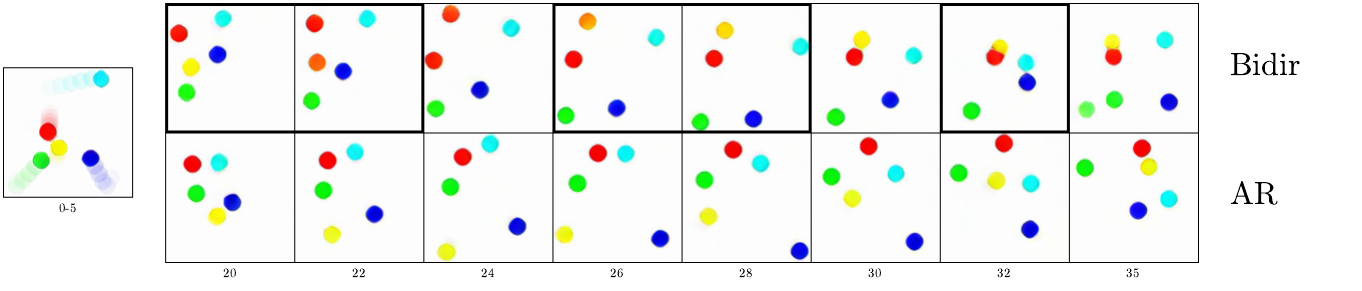}

    \caption{}
    \label{fig:qualitative-b}
  \end{subfigure}

	  \caption{\textbf{Qualitative failures, continued.}
	  Illustrations follow the same format as Figure~\ref{fig:qualitative-a}.}
\end{figure}

\FloatBarrier

\section{Vibrating-Wall Hard-Sphere Dynamics}
\label{app:noisy-hard-sphere}

In Section~\ref{theory}, we formulate our theory for deterministic video prediction: conditioned on a fully observed initial state, the future trajectory is unique.
Natural videos, however, often involve stochasticity, partial observability, or unmodeled perturbations.
To test whether the seriality gap persists beyond the deterministic setting, we conduct a preliminary experiment on a stochastic variant of the hard-sphere dynamics dataset.

In this vibrating-wall variant, wall collisions are no longer perfectly deterministic. Whenever a ball bounces off a wall, we perturb its speed by a random amount
\[
s_{\mathrm{after}} = |s_{\mathrm{before}} + \delta s|,
\qquad\delta s \sim \mathrm{Unif}[-6,6],
\]
while keeping the direction of the post-collision velocity unchanged. Thus, even with the same initial state, the future trajectory is no longer unique: different wall vibrations induce different valid rollouts.

The vibrating-wall setting breaks the point-mass assumption used in the deterministic theory, but it does not eliminate all serial structure. Between vibrating-wall collisions, the motion remains deterministic and follows the same local physical laws as in the original hard-sphere system. We therefore evaluate local physical consistency only on deterministic segments. Concretely, we compute the rollout-5 error $\Delta x^{(5)}$ after excluding any frame for which the five-step rollout window contains a ball-to-wall collision. This isolates whether the generated trajectory is locally consistent within intervals where the dynamics are deterministic.

\begin{figure}[h]
  \centering
  \includegraphics[width=0.5\linewidth]{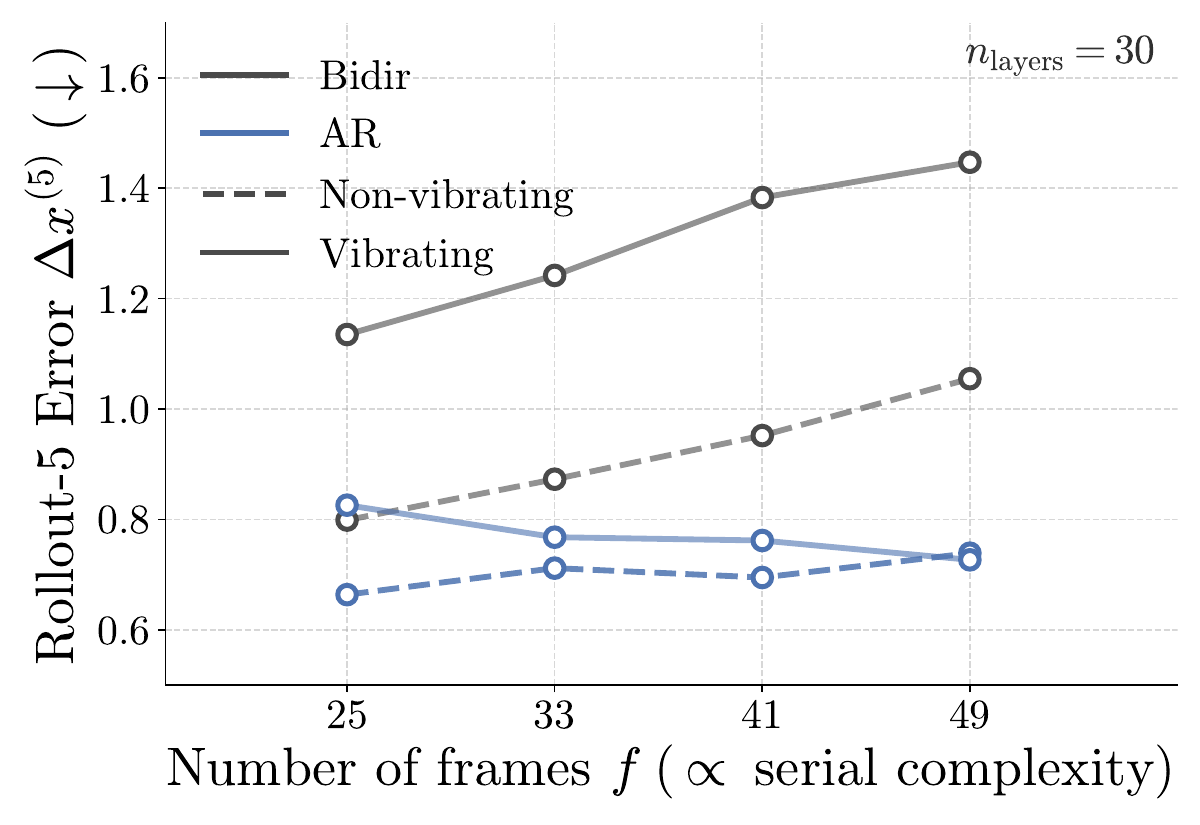}
  \caption{\textbf{Preliminary results on vibrating-wall hard-sphere dynamics.}
  We introduce stochasticity by perturbing the speed of a ball after each wall collision, while leaving the velocity direction unchanged.
  Evaluation reports rollout-5 error only on deterministic segments, excluding windows that contain a wall collision.
  Although preliminary, the results show the same qualitative pattern as the deterministic setting, suggesting that the seriality gap is not merely an artifact of globally deterministic rollouts.}
  \label{fig:noisy}
\end{figure}

\begin{table}[h]
\centering
\caption{\textbf{Model trends are consistent under vibrating-wall dynamics.}
Physical accuracy and visual quality for bidirectional and autoregressive diffusion models on $n=5$ vibrating-wall hard-sphere videos of different lengths $f$.}
\begin{tabular}{ccccccccc}
\toprule
Type & $d_{\text{model}}$ & $n_{\text{layers}}$ & $n_\text{params}$ & TFLOPs & $\Delta x^{(\text{GT})}$ $\downarrow$ & $\Delta x^{(1)}$ $\downarrow$ & $\Delta x^{(5)}$ $\downarrow$ & IoU $\uparrow$ \\
\midrule

\rowcolor{gray!10}
\multicolumn{9}{l}{\textbf{$f = 25$ frames, $n = 5$ balls, vibrating-wall}} \\
\addlinespace[2pt]
Bidir & 768 & 30 & 217M & 10.4 & 1.941 & 0.312 & 1.135 & 0.670 \\
AR    & 768 & 30 & 217M & 7.3  & 1.726 & 0.197 & 0.826 & 0.718 \\

\midrule

\rowcolor{gray!10}
\multicolumn{9}{l}{\textbf{$f = 33$ frames, $n = 5$ balls, vibrating-wall}} \\
\addlinespace[2pt]
Bidir & 768 & 30 & 217M & 13.8 & 2.454 & 0.354 & 1.242 & 0.660 \\
AR    & 768 & 30 & 217M & 10.3 & 2.281 & 0.177 & 0.768 & 0.728 \\

\midrule

\rowcolor{gray!10}
\multicolumn{9}{l}{\textbf{$f = 41$ frames, $n = 5$ balls, vibrating-wall}} \\
\addlinespace[2pt]
Bidir & 768 & 30 & 217M & 17.2 & 2.895 & 0.397 & 1.383 & 0.660 \\
AR    & 768 & 30 & 217M & 13.4 & 2.720 & 0.177 & 0.762 & 0.722 \\

\midrule

\rowcolor{gray!10}
\multicolumn{9}{l}{\textbf{$f = 49$ frames, $n = 5$ balls, vibrating-wall}} \\
\addlinespace[2pt]
Bidir & 768 & 30 & 217M & 20.9 & 3.163 & 0.421 & 1.447 & 0.662 \\
AR    & 768 & 30 & 217M & 16.6 & 3.017 & 0.167 & 0.727 & 0.726 \\

\bottomrule
\end{tabular}
\end{table}

The results in \cref{fig:noisy} are preliminary, and we do not claim that this simple vibrating-wall perturbation fully captures the ambiguity of natural video. Nevertheless, the same qualitative trend persists: bidirectional denoising remains worse on local physical consistency than inference schemes with more effective serial computation. This suggests that the deterministic theory identifies one clean mechanism behind the seriality gap, but the phenomenon may extend beyond the exact assumptions of the theorem.

\section{Ruling Out Alternative Explanations}
\label{app:alternative-explanations}

We test whether the observed advantage of autoregressive generation could be explained by optimization or initialization rather than inference factorization. Specifically, we vary training batch size, repeat training across independent seeds, and fine-tune models initialized from a pretrained Wan checkpoint. In each case, the qualitative autoregressive--bidirectional gap persists.

\subsection{Batch Size}

Diffusion training can be sensitive to batch size, with prior scaling experiments reporting improved generation metrics at larger batch sizes~\citep{Li_2024_CVPR_Scalability}. Because computational constraints limit our main experiments to a batch size $b=64$, we repeat the autoregressive--bidirectional comparison with $b=256$. We keep all other settings fixed, including the learning rate of $2 \times 10^{-4}$, which we do not retune for the larger batch. The results are shown in Figure~\ref{fig:batch_size}. Although the $b=256$ models have higher absolute error, potentially because the learning rate was not retuned, the seriality gap persists at both batch sizes.

\pagebreak

\begin{figure}[h]
  \centering
  \includegraphics[width=0.5\linewidth]{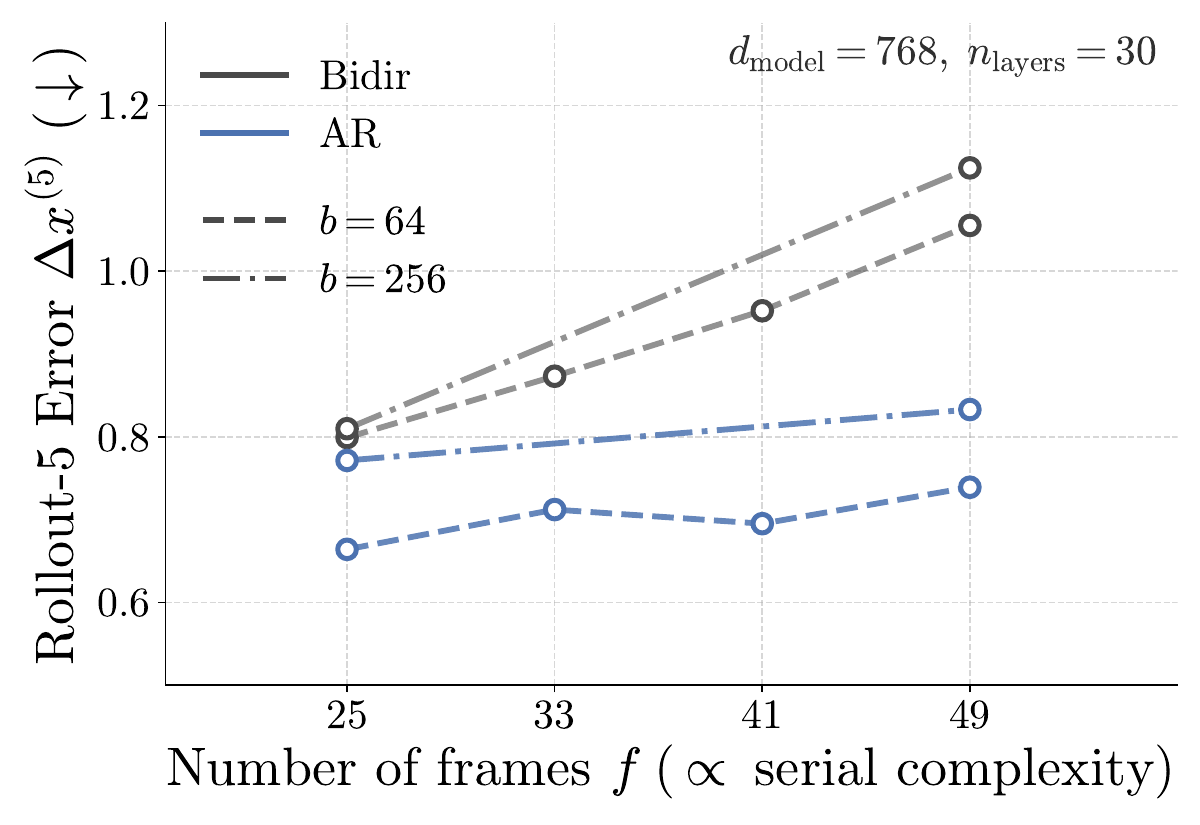}
  \caption{\textbf{Batch-size ablation.}
  Rollout-5 error as a function of video length $f$ for bidirectional and autoregressive models trained with $b\in\{64,256\}$. The $b=256$ models have slightly higher error, possibly because the learning rate was not retuned. At both batch sizes, bidirectional error grows more rapidly with $f$ than autoregressive error, preserving the seriality gap.}
  \label{fig:batch_size}
\end{figure}

\begin{figure}[h]
  \centering
  \includegraphics[width=1.0\linewidth]{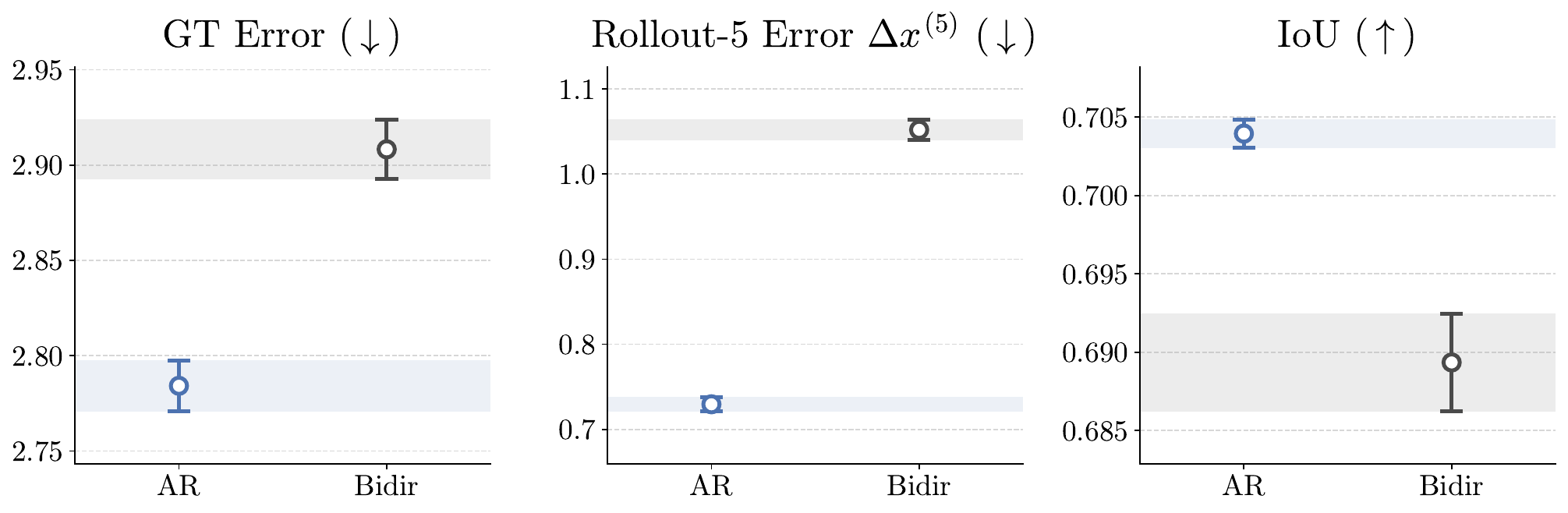}
  \caption{\textbf{Robustness to training-seed variation.}
  Mean global trajectory error $\Delta x^{(\mathrm{GT})}$, rollout-5 error $\Delta x^{(5)}$, and IoU for autoregressive (AR) and bidirectional (Bidir) models across three independent training runs. Error bars and shaded regions denote $\pm1$ standard deviation. AR has lower errors and higher IoU, with non-overlapping intervals.}
  \label{fig:error_bars}
\end{figure}

\begin{figure}[H]
  \centering
  \includegraphics[width=1.0\linewidth]{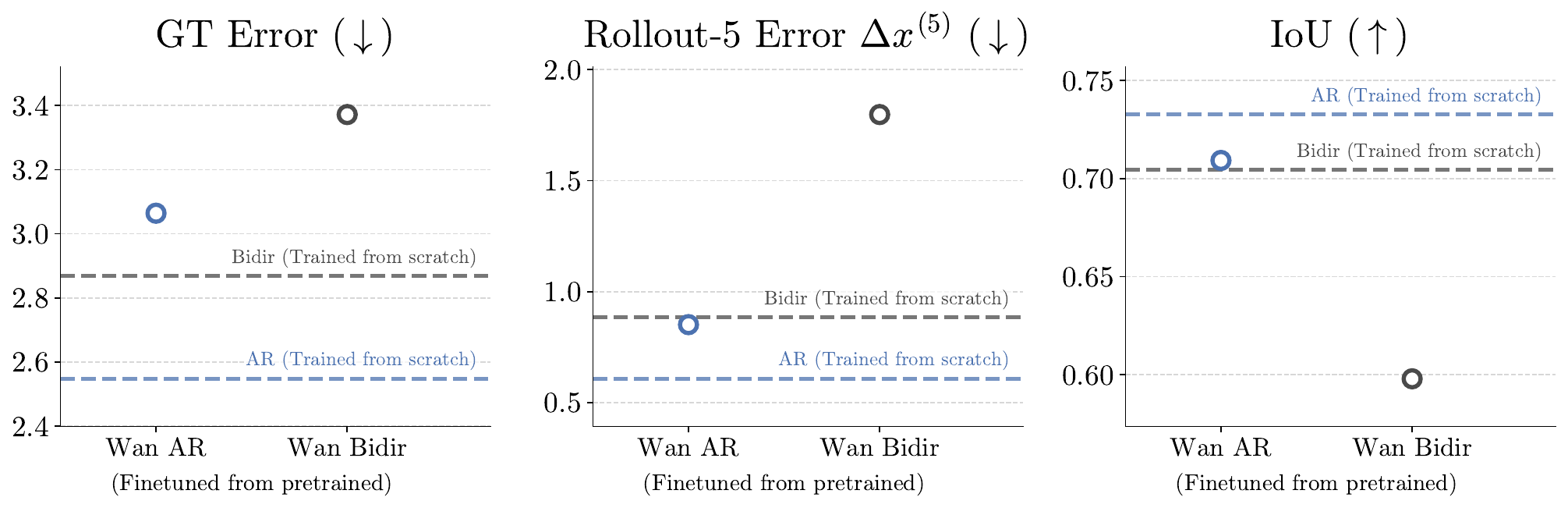}
  \caption{\textbf{Wan initialization preserves the model ordering but lowers absolute performance.}
  Global trajectory error $\Delta x^{(\mathrm{GT})}$, rollout-5 error $\Delta x^{(5)}$, and IoU for models fine-tuned from \texttt{Wan2.1-T2V-1.3B}. Circles show the fine-tuned models, and dashed lines show the corresponding models trained from scratch. AR retains lower errors and higher IoU than Bidir, but both fine-tuned models underperform their trained-from-scratch counterparts, possibly because of a resolution mismatch and the out-of-distribution hard-sphere task.}
  \label{fig:finetuning}
\end{figure}

\pagebreak

\subsection{Training-Seed Variation}
\label{app:lottery}

Independent diffusion training runs can yield different evaluation outcomes due to parameter initialization, minibatch ordering, and sampled training noise~\citep{dufour2026fidlotteryquantifyinghidden}. We therefore train each model with three independent seeds at $d_\text{model}=768$ and $n_\text{layers}=30$, using $f=49$ frames and $n=5$ balls, and evaluate each run on 1,024 examples. The results are shown in Figure~\ref{fig:error_bars}. Across all metrics, the error bars are non-overlapping and well separated, indicating that the autoregressive advantage is robust to training-seed variation rather than the result of a favorable seed lottery.

\subsection{Fine-Tuning from a Pretrained Checkpoint}
\label{app:finetuning}

To test whether pretraining changes the model ordering, we compare our models trained from scratch with variants initialized from the \texttt{Wan-AI/Wan2.1-T2V-1.3B} checkpoint~\citep{wan2025wanopenadvancedlargescale} and fine-tuned on the hard-sphere dataset. We use the same training configuration, changing only the learning rate from $2 \times 10^{-4}$ to $2 \times 10^{-5}$ for fine-tuning. The results are shown in Figure~\ref{fig:finetuning}.

\section{Human Evaluation of Perceptual Plausibility}
\label{app:human-eval}

To validate whether our rollout-$k$ error metric reflects perceptual plausibility, we conduct a lightweight human preference study.
Evaluators used a blind A/B interface, available online\footnote{\url{https://seriality-gap.jdiazchao.com/arena}}.
The interface displays two videos side by side and asks which one looks more physically plausible, with instructions to focus on motion and artifacts rather than sharpness.
We compiled judgments from $21$ volunteer evaluators. No compensation was provided, and the interface did not request personal or demographic information. Each evaluator rated $7$ pairs for each of three comparison groups: AR vs.\ Bidir, AR vs.\ Block-3, and Block-3 vs.\ Bidir, for $21$ pairwise judgments per evaluator and $441$ total judgments.
Video pairs were sampled from examples where the rollout metric indicated a large separation between the two models.

\begin{figure}[h]
  \centering
  \includegraphics[width=\linewidth]{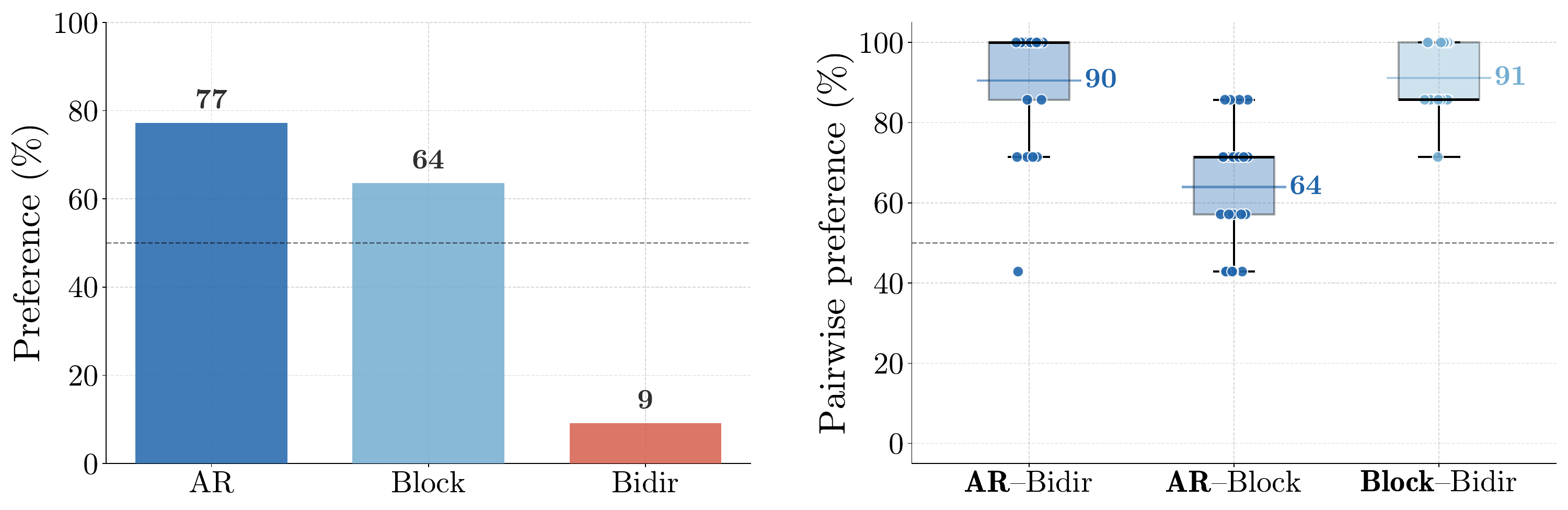}
  \caption{\textbf{Human preference study.}
  Left: aggregate preference rates by model show that AR and Block-3 generations are preferred substantially more often than bidirectional generations. 
Right: pairwise preferences show strong human preference for AR over Bidir and Block-3 over Bidir, while AR is only moderately preferred over Block-3. 
This matches the rollout-error ranking, suggesting that large differences in $\Delta x^{(5)}$ correspond to perceptually salient differences in physical plausibility.}
  \label{fig:user_study}
\end{figure}

The human judgments align with the rollout metric. When $\Delta x^{(5)}$ indicates a large separation, that separation is also perceptible to human evaluators: AR and Block-3 are both strongly preferred over bidirectional generation, whereas the perceptual gap between AR and Block-3 is much smaller. This supports the use of rollout error as a proxy for physically salient generation failures.

\section{Data Synthesis}
\label{data-synth}

Videos are synthesized with a custom NumPy~\cite{Harris_2020} event-driven simulator of frictionless balls in a closed square box.
For ball--ball interactions, the simulator computes the next collision time analytically. For a pair of balls $(i,j)$ with relative displacement $\Delta x = x_j - x_i$, relative velocity $\Delta v = v_j - v_i$, and radii $r_i, r_j$, the collision time $t$ is the smallest nonnegative root of

$$
\|\Delta x + t \Delta v\|_2^2 = (r_i + r_j)^2,
$$

which yields a quadratic equation in $t$. Once $t$ is found, the collision positions follow directly from $x_i(t) = x_i + v_i t$ and $x_j(t) = x_j + v_j t$.

For each rollout, we sample initial ball positions uniformly at random such that no two balls overlap at the first frame, then sample each ball's initial direction uniformly from $[0, 2\pi)$ and speed uniformly from $[8, 10]$. We then simulate a longer trajectory and extract fixed-length clips of length $f$ frames with a sliding window, retaining windows with an overlap of at most $f/2$ to avoid highly similar neighboring clips and increase diversity. Each dataset configuration fixes the number of balls, frame length, box size, ball radius, and rendering resolution.

In the reported experiments, we use $n\in \{1, 5\}$ balls, video lengths $f \in \{25, 33, 41, 49\}$, ball radius $0.7$, world size $10 \times 10$, rendering resolution $128 \times 128$, and $5$ conditioning frames. Training sets contain $20{,}000$ videos. For evaluation, we generate $1{,}024$ base videos of $57$ frames and derive the shorter sets by retaining only the first $f$ frames of each video. This keeps the conditioning frames, and therefore the initial conditions, matched across frame lengths for the same sample, making cross-length comparisons fairer. To avoid trivial trajectories, we retain only windows whose post-conditioning ball--ball collision density,
\[
\frac{\text{number of ball--ball collisions}}
     {\text{active frames}\times\text{number of balls}},
\]
exceeds $0.035$. For $n=5$, this corresponds to at least $5$ collisions when $f=33$ and at least $8$ when $f=49$. For the base $57$-frame evaluation set, we also try to spread samples more evenly over ball--ball collision counts, although the resulting distribution is not perfectly uniform; the shorter trimmed evaluation sets inherit this coverage. Training and evaluation use disjoint random seeds but the same simulator and parameter settings, so evaluation remains in-distribution.

For evaluation, we recover ball positions from generated frames using color-based tracking. 
Each ball is rendered with a distinct RGB color; for each frame and ball, we form a binary mask of pixels whose RGB values lie within a fixed tolerance of that ball's color and compute the mask centroid via image moments. The resulting coordinates are used to compute $\Delta x^{(\mathrm{GT})}$ and $\Delta x^{(k)}$.

The rollout metric $\Delta x^{(k)}$ additionally requires per-ball velocities to seed the simulator. We estimate the velocity at frame $n$ by finite-differencing the recovered centroids, $\hat{v}_n = \hat{x}_n - \hat{x}_{n-1}$. When a collision occurs within the interval $(n-1, n)$, this finite difference straddles the collision and no longer reflects the instantaneous post-collision velocity; in that case, we instead estimate the pre-collision velocity from the preceding interval, $\hat{x}_{n-1} - \hat{x}_{n-2}$, and set $\hat{v}_n$ to the analytic post-collision velocity implied by the hard-sphere collision rule.

\newpage

\section{Attention Masks}
\label{attention-masks}

\begin{figure}[h]
  \centering
  \includegraphics[width=\linewidth]{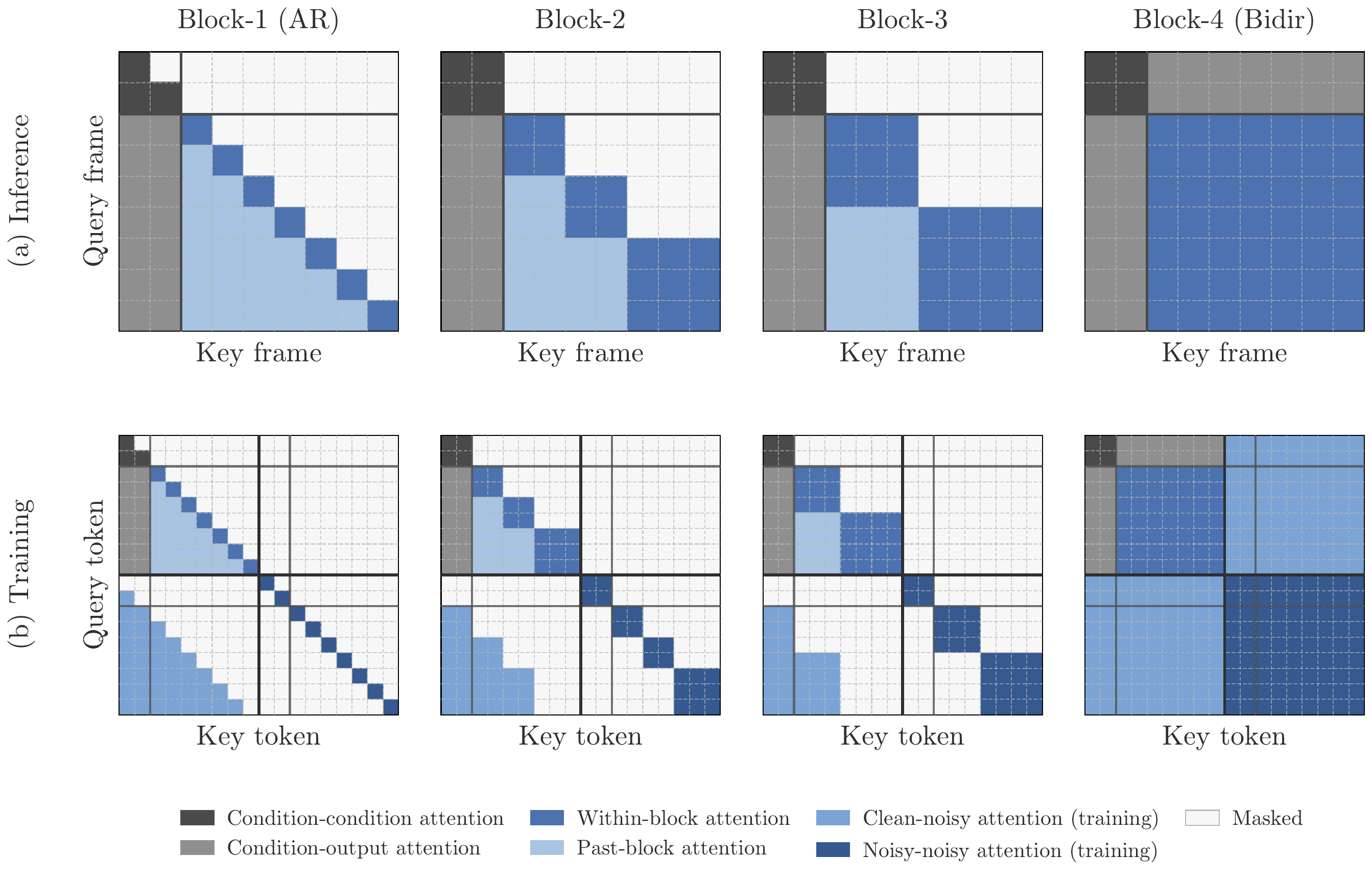}
  \caption{\textbf{Attention masks for blockwise generation.}
  The masks are illustrated for the $f=25$ setting.
  Rows denote query positions and columns denote key/value positions; colored cells indicate permitted attention entries and white cells are masked.
  The top row shows inference-time masks over condition and output frames, while the bottom row shows the corresponding training-time masks over clean and noisy token copies.
  Columns compare autoregressive Block-1 masking, intermediate block-causal masks, and bidirectional masking.
  Thick lines mark condition/output and clean/noisy boundaries; dashed lines show individual frame or token cells.}
  \label{fig:attn-masks}
\end{figure}

\newpage
\end{document}